\newcommand{\BibTeX}{\textsc{B\kern-0.1emi\kern-0.017emb}\kern-0.15em\TeX}
\def\ci{\perp\!\!\!\perp}
\def\nni{\not\!\perp\!\!\!\perp}
\begin{document}


\title{Functional Linear Non-Gaussian Acyclic Model for Causal Discovery}

\author{\name Tian-Le Yang \email yangtianle1996@gmail.com \\
       \addr Graduate School of Engineering Science\\
       Osaka University\\ 
       1-3, Machikaneyama, Toyonaka, Osaka, Japan
       \AND
       \name Kuang-Yao Lee \email kuang-yao.lee@temple.edu\\
       \addr Department of Statistics, Operations, and Data Science\\ Temple University\\ Philadelphia, PA 19122 United States
       \AND
       \name Kun Zhang \email kunz1@cmu.edu\\
       \addr Machine Learning Department \\ Carnegie Mellon University \& Mohamed bin Zayed University of Artificial Intelligence \\
       Pittsburgh, PA 15213, USA \& Masdar city, Abu Dhabi, United Arab Emirates
       \AND
       \name Joe Suzuki
       \email prof.joe.suzuki@gmail.com \\
       \addr Graduate School of Engineering Science\\
       Osaka University\\ 
       1-3, Machikaneyama, Toyonaka, Osaka, Japan
       }

\maketitle

\begin{abstract}
In causal discovery, non-Gaussianity has been used to characterize the complete configuration of a Linear Non-Gaussian Acyclic Model (LiNGAM), encompassing both the causal ordering of variables and their respective connection strengths. 
However, LiNGAM can only deal with the finite-dimensional case. To expand this concept, we extend the notion of variables to encompass vectors and even functions, leading to the Functional Linear Non-Gaussian Acyclic Model (Func-LiNGAM). Our motivation stems from the desire to identify causal relationships in brain-effective connectivity tasks involving, for example, fMRI and EEG datasets. We demonstrate why the original LiNGAM fails to handle these inherently infinite-dimensional datasets and explain the availability of functional data analysis from both empirical and theoretical perspectives. {We establish theoretical guarantees of the identifiability of the causal relationship among non-Gaussian random vectors and even random functions in infinite-dimensional Hilbert spaces.} To address the issue of sparsity in discrete time points within intrinsic infinite-dimensional functional data, we propose optimizing the coordinates of the vectors using functional principal component analysis.
Experimental results on synthetic data verify the ability of the proposed framework to identify causal relationships among multivariate functions using the observed samples. For real data, we focus on analyzing the brain connectivity patterns derived from fMRI data.
\end{abstract}

\begin{keywords}
LiNGAM, Causal Discovery, Functional Data, Darmois-Skitovich Theorem, Non-Gaussian, Gaussian Process
\end{keywords}

\section{Introduction}\label{s1}
Numerous empirical sciences strive to uncover and comprehend causal mechanisms that underlie a wide range of natural phenomena and human social behavior. Causal discovery has a wide range of applications, including biology \citep{biology}, climate studies \citep{climate}, and healthcare \citep{health}. When determining the cause-and-effect relationship between variables, such as $X_1$ and $X_2$, detecting their dependence alone is insufficient for determining the causal direction, i.e., whether $X_1\rightarrow X_2$ or $X_2\rightarrow X_1$. 

Causal analysis based on the LiNGAM, proposed by \cite{Shimizu}, addresses this challenge by identifying the causal directions in linear relationships. Specifically, supposing there is no latent common cause for $X_1$ and $X_2$, it figures out the causal direction between them by checking which of the following two models holds: $X_2=aX_1+\epsilon$ and $X_1=a'X_2+\epsilon'$, where $X_1\ci \epsilon$ and $X_2\ci \epsilon'$ and $a, a'\in\mathbb{R}$.\footnote{$X_1\ci X_2$ denotes the independence of $X$ and $Y$.} The sufficient and necessary condition of the identifiability is that LiNGAM requires at most one of the noise terms (including the root causes) to be non-Gaussian to make it possible to identify unique causal directions. Notably, zero correlation is synonymous with independence in Gaussian variables, making it impossible to distinguish between the two causal models when $X_1$ and $X_2$ are Gaussian. 

In this linear, Gaussian case, one can only end up with the so-called Markov equivalence class (all members of the equivalence class have the same conditional independence relations), even when adhering to faithfulness assumption \citep{Spirtes,Pearl}. 
For instance,
the three Directed Acyclic Graphs (DAGs) connecting three variables, such as $X_1, X_2, X_3$, in Fig. \ref{f1} are Markov equivalent because they have the same distribution in the Gaussian case.
Here faithfulness refers to the property where any independence relations observed in the data can be explained by the causal relationships represented in the graphical model. However, this is not the case anymore in non-Gaussian cases.  Due to this significant advancement, LiNGAM can uniquely determine the causal ordering among variables solely based on observational data, even without assuming faithfulness. 

For the converse, the Darmois-Skitovich theorem (D-S) is employed to prove the identifiability of causal direction. From D-S, if at least one of the variables $X_{1}$ and $X_{2}$ are non-Gaussian, then only one unique direction of $X_1\rightarrow X_2$ and $X_2\rightarrow X_1$ exists. The Darmois-Skitovich (D-S) theorem originally focused on one-dimensional Gaussian random variables. Interestingly, \cite{MDS} expanded its application to random vectors, while \cite{DSB} generalized it to Banach spaces. In our paper, random elements that take values in a Banach space are called random functions. 

\begin{figure}\label{f1}
\setlength{\unitlength}{0.7mm}
\begin{picture}(210,40)
\put(10,10){\circle{10}}
\put(30,10){\circle{10}}
\put(20,30){\circle{10}}
\put(10,8){\makebox[0pt][c]{$X_1$}}
\put(30,8){\makebox[0pt][c]{$X_2$}}
\put(20,28){\makebox[0pt][c]{$X_3$}}
\put(10,15){\vector(2,3){7}}
\put(22,25){\vector(2,-3){7}}
\put(50,10){\circle{10}}
\put(70,10){\circle{10}}
\put(60,30){\circle{10}}
\put(50,8){\makebox[0pt][c]{$X_1$}}
\put(70,8){\makebox[0pt][c]{$X_2$}}
\put(60,28){\makebox[0pt][c]{$X_3$}}
\put(58,25){\vector(-2,-3){7}}
\put(70,15){\vector(-2,3){7}}
\put(90,10){\circle{10}}
\put(110,10){\circle{10}}
\put(100,30){\circle{10}}
\put(90,8){\makebox[0pt][c]{$X_1$}}
\put(110,8){\makebox[0pt][c]{$X_2$}}
\put(100,28){\makebox[0pt][c]{$X_3$}}
\put(97,26){\vector(-2,-3){7}}
\put(103,26){\vector(2,-3){7}}

\put(150,22){$X_2$}
\put(150,3){$X_1$}
\put(168,16){$\epsilon$}
\put(130,30){$X_1\rightarrow X_2$}
\put(135,10){\vector(1,0){30}}
\put(165,10){\vector(0,1){15}}
\put(135,10){\vector(2,1){30}}

\put(190,22){$X_1$}
\put(190,3){$X_2$}
\put(208,16){$\epsilon^\prime$}
\put(170,30){$X_2\rightarrow X_1$}
\put(175,10){\vector(1,0){30}}
\put(205,10){\vector(0,1){15}}
\put(175,10){\vector(2,1){30}}
\end{picture}
\caption{\small Structure learning methods like the PC algorithm cannot distinguish these causal graphs that have the identical probability distribution $P(X_1X_2)P(X_2X_3)/P(X_2)$ (Left). But LiNGAM can differentiate them via the non-Gaussian assumption (Right).}
\end{figure}
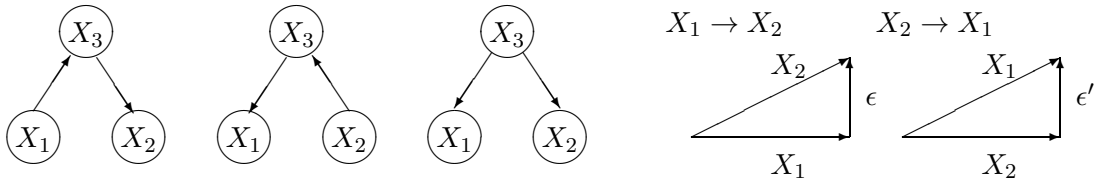

This paper establishes a novel functional framework for modeling the causal structure of multivariate functional data, which is the realization of random functions. It is important to note that functional data is inherently infinite-dimensional. If we apply conventional models such as PC or LiNGAM directly, we might incorrectly identify causal relationships, as shown in Fig. \ref{E1}. To demonstrate the benefits of {functional data analysis \citep{FDA2005}}, we provide an example in {Fig. \ref{E2}}, illustrating how smoothing the discrete points enables us to capture missing information. Functional data analysis has gained prominence in diverse fields, including neuroimaging \citep{22}, finance \citep{21}, and genetics \citep{20}. Exploring causal relationships among random functions presents a significant challenge in multivariate functional data analysis.

This research is motivated by brain-effective connectivity \citep{23}, which explores the directional effects between neural systems. Learning brain-effective connectivity networks from electroencephalogram (EEG), functional magnetic resonance imaging (fMRI), and electrocorticographic imaging (ECoG) records is crucial for understanding brain activities and neuron responses. Modeling these multivariate processes and accurately estimating effective connections between brain areas pose significant challenges due to the continuous nature of the data and the need to treat the data as functions, considering the small time intervals between adjacent sample points.
Previous studies, such as \cite{FGM}, has explored the functional aspects of the Gaussian graphical model by estimating the inverse covariance matrix.  \cite{24} introduced the functional directional relationships under Gaussian assumption, enabling the determination of a directed acyclic graph (DAG) up to its equivalence class. The previous version of this paper \cite{TJ2022} discussed the identifiability without considering one important point for functional data: the covariance operator's non-invertibility. Moreover, the previous algorithm for functional data is not accurate because it only tests the independence of every principal component rather than the whole random vector. {\cite{ZY2022} developed a
novel Bayesian network model for multivariate functional data.} \cite{roy2023directed} considers the directed cyclic model for functional data. In contrast to previous works, our approach differs in that we first establish the identifiability of random vectors. Subsequently, we demonstrate the identifiability of random functions considering the non-invertibility {and extend it into multivariate scenarios.}
\textbf{Our contributions are as follows}:
\begin{itemize}
    \item[$\bullet$] We establish a framework for discovering causal orders for random vectors and functions, moving beyond the traditional focus on random variables.
    \item[$\bullet$] We theoretically prove that it is possible to identify the causal order under non-Gaussianity for random vectors (Theorem \ref{t1}).
    \item[$\bullet$] We further demonstrate the identifiability of the causal order for non-Gaussian processes in infinite-dimensional Hilbert spaces (Theorem \ref{t6}).
    \item[$\bullet$] To verify the validity of our method, we performed extensive experiments
with simulated data as Table~\ref{tab:my_label}. Empirical results demonstrate the identifiability. The results show that it performs worse as the number of functions increases, which is reasonable. But as the sample size increases, it performs better. We need more data for larger dimensions, but the required amounts are still reasonable.
\end{itemize}
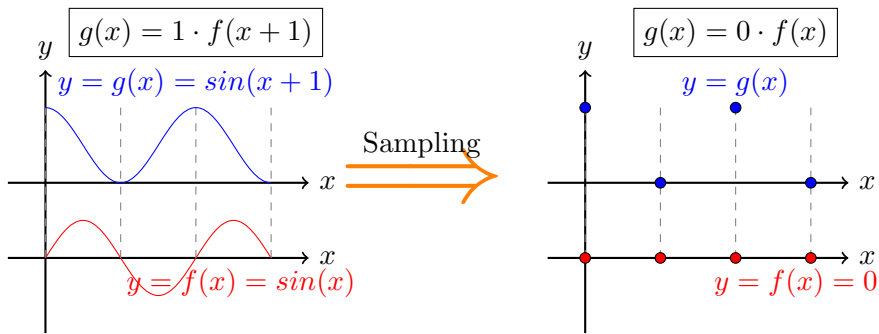
\begin{figure*}[t]
\centering
\begin{tikzpicture}

\draw[->,thick] (-0.5,0) -- (3.5,0) node[right] {$x$};
\draw[->,thick] (-0.5,1) -- (3.5,1) node[right] {$x$};
\draw[->,thick] (0,-1.0) -- (0,2.5) node[above] {$y$};



\draw[color=red] (0,0) sin (0.5,0.5) cos (1,0) sin (1.5,-0.5) cos (2,0) sin (2.5,0.5) cos (3,0);
\node[above, blue] at (2.0,2) {$y = g(x)=sin(x+1)$};
\draw[gray, dashed] (2,0)--(2,2);
\draw[gray, dashed] (0,0)--(0,2);
\draw[gray, dashed] (1,0)--(1,2);
\draw[gray, dashed] (3,0)--(3,2);



\draw[color=blue] (0,2) sin (0,2) cos (0.5,1.5) sin (1,1) cos (1.5,1.5) sin (2,2) cos (2.5,1.5) sin(3,1);

\node[below, red] at (2.6,0) {$y = f(x)=sin(x)$};
\node[draw] at (2.0,3.0) {$g(x)=1\cdot f(x+1)$};
\node[scale=3.5,orange] at (5.0,1.0) {${\Longrightarrow}$};
\node[] at (5.0,1.5) {Sampling};
\end{tikzpicture}
\begin{tikzpicture}
    \node[draw] at (2.0,3.0) {$g(x)=0\cdot f(x)$};
\draw[->,thick] (-0.5,0) -- (3.5,0) node[right] {$x$};
\draw[->,thick] (-0.5,1) -- (3.5,1) node[right] {$x$};
\draw[->,thick] (0,-1.0) -- (0,2.5) node[above] {$y$};
\draw[color=red, dashed] (0,0) (1,0) (2,0) (3,0);
\node[below, red] at (2.8,0) {$y = f(x) = 0$};

\draw[gray, dashed] (2,0)--(2,2);
\draw[gray, dashed] (0,0)--(0,2);
\draw[gray, dashed] (1,0)--(1,2);
\draw[gray, dashed] (3,0)--(3,2);

\node[circle,draw=black, fill=blue, inner sep=0pt,minimum size=4pt] (b) at (0,2) {};
\node[circle,draw=black, fill=blue, inner sep=0pt,minimum size=4pt] (b) at (1,1) {};
\node[circle,draw=black, fill=blue, inner sep=0pt,minimum size=4pt] (b) at (2,2) {};
\node[circle,draw=black, fill=blue, inner sep=0pt,minimum size=4pt] (b) at (3,1) {};

\node[circle,draw=black, fill={rgb:red,1;green,0;blue,0}, inner sep=0pt,minimum size=4pt] (b) at (0,0) {};
\node[circle,draw=black, fill={rgb:red,1;green,0;blue,0}, inner sep=0pt,minimum size=4pt] (b) at (1,0) {};
\node[circle,draw=black, fill={rgb:red,1;green,0;blue,0}, inner sep=0pt,minimum size=4pt] (b) at (2,0) {};
\node[circle,draw=black, fill={rgb:red,1;green,0;blue,0}, inner sep=0pt,minimum size=4pt] (b) at (3,0) {};

\draw[color=blue] (0,2) (1,1) (2,2) (3,1);

\node[above, blue] at (2.0,2) {$y=g(x)$};
\end{tikzpicture}

    \caption{\textbf{Illustration of why the original LiNGAM does not work}. The Left Graph: original two stochastic processes with their causal relationships; The Right Graph: a possible situation where we sample the time series but miss the causal relationship.}
    \label{E1}
\end{figure*} 

The structure of the paper is as follows. Section \ref{s2} provides the necessary background information to comprehend this paper. This includes introducing the LiNGAM, infinite-dimensional Hilbert spaces, and random elements (random functions). Section \ref{s3} and \ref{s4} present the main theoretical results extending the LiNGAM and outlines the corresponding procedure. Section \ref{s5} and \ref{s6} present the experimental results. Section \ref{s7} summarizes the key points.

\section{Background}\label{s2}
\subsection{Linear Non-Gaussian Acyclic Model (LiNGAM)}
This section introduces the concept of the LiNGAM for inferring the causal relationships among random variables.

Suppose two random variables $X_1, X_2\in\mathbb{R}$, we want to identify the causal directions of either $X_1\rightarrow X_2$ or $X_2\rightarrow X_1$.
More specifically, our analysis assumes that $X_1$ and $X_2$ are linearly related and have zero means. Such as{
\begin{equation}\label{eq11}
X_1 = e_1,\quad X_2=aX_1+e_2\ ,
\end{equation}
\begin{equation}\label{eq22}
X_2 = e_1',\quad X_1=a'X_2+e_2'
\end{equation}}
with $a,a'\in {\mathbb R}$ and ${\mathbb E}[\epsilon]={\mathbb E}[\epsilon']=0$.
To be simple, we let 
\begin{equation}\label{eq23}
a\not=0\ ,\ {\rm or}\ a'\not=0\ ,
\end{equation}
to avoid $X_1\ci X_2$.
Specifically, in the context of LiNGAM, under the assumption of the noise terms, denoted as $\epsilon$ and $\epsilon'$, are independent of their respective covariates, $X_1$ and $X_2$ in (\ref{eq11}) and (\ref{eq22}).
Therefore, based on the condition of $X_1\ci e_2$ or $X_2\ci e_2'$, we determine the true causal model to be either (\ref{eq11}) or (\ref{eq22}).
\begin{figure*}
    \centering
\begin{tikzpicture}
\draw[->,thick] (-0.5,0) -- (3.5,0) node[right] {$x$};
\draw[->,thick] (-0.5,1) -- (3.5,1) node[right] {$x$};
\draw[->,thick] (0,-1.0) -- (0,2.5) node[above] {$y$};
\node[below, red] at (2.8,0) {$y = f(x) = 0$};

\draw[gray, dashed] (2,0)--(2,2);
\draw[gray, dashed] (0,0)--(0,2);
\draw[gray, dashed] (1,0)--(1,2);
\draw[gray, dashed] (3,0)--(3,2);

\node[circle,draw=black, fill=blue, inner sep=0pt,minimum size=4pt] (b) at (0,2) {};
\node[circle,draw=black, fill=blue, inner sep=0pt,minimum size=4pt] (b) at (1,1) {};
\node[circle,draw=black, fill=blue, inner sep=0pt,minimum size=4pt] (b) at (2,2) {};
\node[circle,draw=black, fill=blue, inner sep=0pt,minimum size=4pt] (b) at (3,1) {};

\node[circle,draw=black, fill={rgb:red,1;green,0;blue,0}, inner sep=0pt,minimum size=4pt] (b) at (0,0) {};
\node[circle,draw=black, fill={rgb:red,1;green,0;blue,0}, inner sep=0pt,minimum size=4pt] (b) at (1,0) {};
\node[circle,draw=black, fill={rgb:red,1;green,0;blue,0}, inner sep=0pt,minimum size=4pt] (b) at (2,0) {};
\node[circle,draw=black, fill={rgb:red,1;green,0;blue,0}, inner sep=0pt,minimum size=4pt] (b) at (3,0) {};


\node[above, blue] at (2.0,2) {$y=g(x)$};
\node[above, blue] at (2.0,2) {$y=g(x)$};
\node[scale=3.5,orange] at (5.0,1.0) {${\Longrightarrow}$};
\node[] at (5.0,1.5) {FPCA};
\node[draw] at (2.0,3.0) {$g(x)=0\cdot f(x)$};
\end{tikzpicture}
\begin{tikzpicture}
\draw[->,thick] (-0.5,0) -- (3.5,0) node[right] {$x$};
\draw[->,thick] (-0.5,1) -- (3.5,1) node[right] {$x$};
\draw[->,thick] (0,-1.0) -- (0,2.5) node[above] {$y$};
\draw[color=red, dashed] (0,0) sin (0.5,0.5) cos (1,0) sin (1.5,-0.5) cos (2,0) sin (2.5,0.5) cos (3,0);
\node[below, red] at (2.8,0) {$y = f(x) = 0$};

\draw[gray, dashed] (2,0)--(2,2);
\draw[gray, dashed] (0,0)--(0,2);
\draw[gray, dashed] (1,0)--(1,2);
\draw[gray, dashed] (3,0)--(3,2);

\node[circle,draw=black, fill=blue, inner sep=0pt,minimum size=4pt] (b) at (0,2) {};
\node[circle,draw=black, fill=blue, inner sep=0pt,minimum size=4pt] (b) at (1,1) {};
\node[circle,draw=black, fill=blue, inner sep=0pt,minimum size=4pt] (b) at (2,2) {};
\node[circle,draw=black, fill=blue, inner sep=0pt,minimum size=4pt] (b) at (3,1) {};

\node[circle,draw=black, fill={rgb:red,1;green,0;blue,0}, inner sep=0pt,minimum size=4pt] (b) at (0,0) {};
\node[circle,draw=black, fill={rgb:red,1;green,0;blue,0}, inner sep=0pt,minimum size=4pt] (b) at (1,0) {};
\node[circle,draw=black, fill={rgb:red,1;green,0;blue,0}, inner sep=0pt,minimum size=4pt] (b) at (2,0) {};
\node[circle,draw=black, fill={rgb:red,1;green,0;blue,0}, inner sep=0pt,minimum size=4pt] (b) at (3,0) {};

\draw[color=blue, dashed] (0,2) sin (0,2) cos (0.5,1.5) sin (1,1) cos (1.5,1.5) sin (2,2) cos (2.5,1.5) sin(3,1);
\node[draw] at (2.0,3.0) {$g(x)\approx f(x+1)$};
\end{tikzpicture}

    \caption{\textbf{Illustration of Why Func-LiNGAM Work}. (Smoothing: Functional data analysis) The Left Graph: with the worst situation when we sample the time series and miss the causal relationship, where we get $g$ and $f$ have no causal relationship. The Right Graph: we can complete the discrete points into smooth curves with the Functional data analysis, capturing extra information when choosing suitable bases.}
    \label{E2}
\end{figure*}
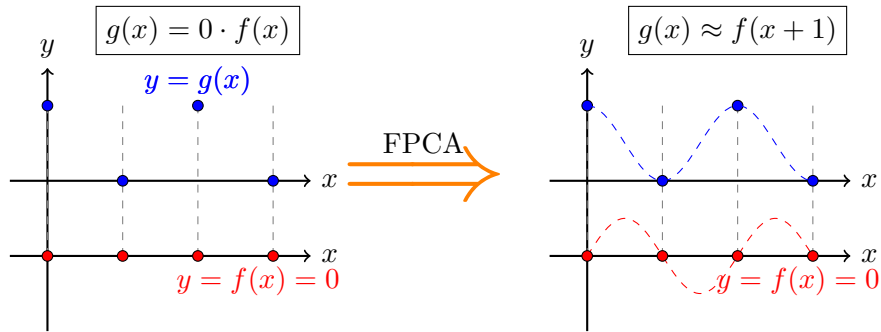
It may initially appear that distinguishing between (\ref{eq11}) and (\ref{eq22}) is not possible, in other words, $X_1$ and $X_2$ could satisfy both equations for certain values of $a$, $a'$, $e_2$, and $e_2'$, where $X_1\ci e_2$ and $X_2\ci e_2'$. LiNGAM claims that this inconvenience occurs if and only if $X_1$ and $X_2$ are Gaussian. In other words, we can identify (\ref{eq11}) and (\ref{eq22}) if and only if at least one of $X_1$ and $X_2$ are non-Gaussian.

For the sufficient part, we show that if variables are both Gaussian, then causal order is unidentifiable. {Suppose} $X_1,X_2$ both are normally distributed, and the model (\ref{eq11}) with $X_1\ci e_2$ is true for certain $a$ and $\epsilon$.
Let 
$\sigma_1^2,\sigma_2^2$ be the variances of $e_1$ and $e_2$. Then, from ${\mathbb E}[e_1e_2]=0$, we have 
\begin{equation}\label{eq33}
e_1'=ae_1+e_2
\end{equation}
\begin{equation}\label{eq44}
e_2'=e_1-a'e_1'=e_1-a'(ae_1+e_2)=(1-a'a)e_1-a'e_2\ ,
\end{equation}
and
${\mathbb E}[e_1'e_2']=(1-a'a)\sigma^2_1-a'\sigma_2^2\ ,$
which means that choosing
\begin{equation}\label{eq55}
a'=\frac{a\sigma_1^2}{a^2\sigma_1^2+\sigma_2^2}
\end{equation}
will make the ${\mathbb E}[e_1'e_2']=0$ too. 
We call $W$ and $Z$ jointly Gaussian if the two random variables can be represented as
$
\left[
\begin{array}{c}
Z\\W
\end{array}
\right]
=A
\left[
\begin{array}{c}
U\\V
\end{array}
\right]
$
where $A\in {\mathbb R}^{2\times 2}$ and $U,V$ are independent Gaussian.

The well-known property states that independence is equivalent to zero correlation for jointly Gaussian variables\footnote{Suppose $Z$ and $W$ be binary taking $\pm 1$ equiprobably and zero-mean Gaussian. Then, 
$ZW$ and $Z$ are not jointly Gaussian.
Even though ${\mathbb E}[ZW\cdot Z]={\mathbb E}[W]\cdot{\mathbb E}[Z^2]=0$
but they are not independent.}. By checking $e_1'$ and $e_2'$ belonging to joint Gaussian distribution, we can conclude that $e_1'$ is independent of $e_2'$. Consequently,(\ref{eq22}) holds with $X_2\ci \epsilon'$ for the corresponding $a',\epsilon'$.

For the necessary part, assume that $X\ci \epsilon$ for (\ref{eq11}) and $Y\ci \epsilon'$ for (\ref{eq22}) both hold simultaneously for certain $a,a',\epsilon,\epsilon'$, where $a'$ satisfies (\ref{eq55}).
Therefore, this means that $a,a'\neq 0$ due to (\ref{eq23}) and (\ref{eq55}).
Now note the statement as follows:
\begin{proposition}[\cite{S,D}]\label{p1}\rm
Let $m\geq 2$ and independent random variables $\xi_{1},\ldots,\xi_m\in{\mathbb R}$.
Let two linear form $L_1=\sum_{i=1}^{m} \alpha_{i} \xi_{i}$ and $L_2=\sum_{i=1}^{m}\beta_{i} \xi_{i}$, if 
$L_1\ci L_2$,
for $\alpha_1,\ldots,\alpha_m$, $\beta_1,\ldots,\beta_m \in {\mathbb R}$.
Then the random variable $\xi_{i}$ such that $\alpha_{i} \beta_{i} \neq 0$ belongs to Gaussian for $i=1,\ldots,m$.
\end{proposition}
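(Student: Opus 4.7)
My plan is to translate the hypothesis $L_1 \ci L_2$ into a functional equation satisfied by the logarithms of the characteristic functions, then differentiate it to identify each relevant $\xi_k$ as Gaussian via its log-characteristic function being a quadratic polynomial. Let $\varphi_k$ denote the characteristic function of $\xi_k$. Mutual independence of $\xi_1,\ldots,\xi_m$ gives
$$\mathbb{E}[e^{i(sL_1+tL_2)}] = \prod_{k=1}^m \varphi_k(s\alpha_k + t\beta_k),$$
while $L_1 \ci L_2$ gives $\mathbb{E}[e^{i(sL_1+tL_2)}] = \prod_k \varphi_k(s\alpha_k) \prod_k \varphi_k(t\beta_k)$. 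On a neighborhood $U$ of the origin where no $\varphi_k$ vanishes, taking logarithms and writing $\psi_k = \log \varphi_k$ yields
$$\sum_{k=1}^m \psi_k(s\alpha_k + t\beta_k) \;=\; \sum_{k=1}^m \psi_k(s\alpha_k) + \sum_{k=1}^m \psi_k(t\beta_k).$$

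Applying the mixed partial $\partial_s \partial_t$ annihilates the right side, producing
$$\sum_{k=1}^m \alpha_k \beta_k \, \psi_k''(s\alpha_k + t\beta_k) = 0 \quad \text{on } U. \quad (\star)$$
Indices with $\alpha_k \beta_k = 0$ drop out. On the surviving indices $I = \{k : \alpha_k \beta_k \neq 0\}$, I would successively apply the first-order operators $\beta_j \partial_s - \alpha_j \partial_t$; each annihilates the $j$-th summand and rescales the $k$-th by the factor $(\beta_j \alpha_k - \alpha_j \beta_k)$. After first grouping indices whose coefficient vectors $(\alpha_k,\beta_k)$ are proportional into aggregated summands so that distinct groups have non-collinear vectors, iterated elimination leaves a single aggregated summand, forcing $\psi_k^{(n)} \equiv 0$ on $U$ for some derivative order $n$. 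Thus each $\psi_k$ is polynomial on $U$, and Marcinkiewicz's theorem (a characteristic function equal to $\exp P(z)$ with $P$ polynomial forces $\deg P \leq 2$) identifies each $\psi_k$ with $k \in I$ as a quadratic, i.e., $\xi_k$ is Gaussian.

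Two technical issues would be the main obstacles. First, justifying differentiability of $\psi_k$ without assumed moments: I would convolve each $\xi_k$ with an independent small-variance Gaussian to make every characteristic function $C^{\infty}$, then invoke Cram\'er's decomposition theorem (if $X + Y$ is Gaussian with $X, Y$ independent, both are Gaussian) to descend the Gaussianity conclusion back to the original $\xi_k$. Second, and more essential, is the combinatorics of the elimination step when several coefficient vectors are collinear; here one must treat each collinear group as a single aggregated function and, via a scalar Cauchy-type functional equation on the real line, argue that the aggregated function is quadratic and hence each of its component $\psi_k$'s individually is quadratic. This grouping-and-descent is where the classical Darmois--Skitovich argument takes its most care.
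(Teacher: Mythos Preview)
The paper does not prove Proposition~\ref{p1}; it is quoted as the classical Darmois--Skitovich theorem with citations to the original sources and then invoked as a black box in the LiNGAM identifiability argument. There is thus no in-paper proof to compare against.

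Your outline is the standard route: factorize the joint characteristic function via independence, pass to logarithms on a neighborhood of the origin, apply $\partial_s\partial_t$ and then iterated directional operators $\beta_j\partial_s-\alpha_j\partial_t$ to isolate a single (aggregated) summand, deduce that it is a polynomial, invoke Marcinkiewicz to force degree at most two, and handle the absence of moment assumptions via Gaussian smoothing followed by Cram\'er's decomposition. One point deserves tightening. In the collinear case, knowing that the aggregated $\Psi=\sum_{k\in\text{group}}\psi_k(c_k\,\cdot)$ is quadratic does not by algebra alone make each component $\psi_k$ quadratic; the clean finish is that quadraticity of $\Psi$ means the independent sum $\sum_{k\in\text{group}} c_k\xi_k$ is Gaussian, after which a \emph{second} application of Cram\'er's theorem (rather than a Cauchy-type functional equation) forces each summand $\xi_k$ to be Gaussian. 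With that adjustment the sketch is correct and coincides with the classical treatments the paper cites.
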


Following (\ref{eq33})(\ref{eq44})(\ref{eq55}) and the Proposition \ref{p1}, then 
\begin{equation*}\label{eq41}
\begin{aligned}
    (e_1,e_2,a,1,1-aa',-a')    &=
(\xi_1,\xi_2,\alpha_1,\alpha_2,\beta_1,\beta_2)
\\
&=(X,\epsilon,a,1,\frac{\sigma_2^2}{a^2\sigma_1^2+\sigma_2^2},-\frac{a\sigma_1^2}{a^2\sigma_1^2+\sigma_2^2})
\end{aligned}
\end{equation*}
By combining (\ref{eq23}), $X_1,\epsilon$ belong to Gaussian, then $X_2$ is also Gaussian-distributed.
\begin{proposition}[\cite{Direct}]\label{P2}\rm
Assuming (\ref{eq23}), we can identify the causal order using LiNGAM if at least one of two random variables belongs to non-Gaussian.
\end{proposition}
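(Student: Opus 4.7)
The plan is to prove the proposition by contraposition: assume that both causal models (\ref{eq11}) and (\ref{eq22}) can hold simultaneously with $X_1 \ci e_2$ and $X_2 \ci e_2'$, and derive that both $X_1$ and $X_2$ must be Gaussian. Under assumption (\ref{eq23}) together with the relation (\ref{eq55}), both $a$ and $a'$ are nonzero (otherwise the purported second model would force $X_1 \ci X_2$, contradicting the nondegeneracy assumption).

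The first step is to eliminate $X_1, X_2$ and re-express the reverse noises $e_1', e_2'$ as linear forms in the \emph{independent} original pair $(e_1, e_2)$. This is precisely the computation (\ref{eq33})--(\ref{eq44}):
\begin{equation*}
e_1' = a e_1 + e_2, \qquad e_2' = (1-aa') e_1 - a' e_2.
\end{equation*}
The assumed independence $e_1' \ci e_2'$ forces $\mathbb{E}[e_1' e_2'] = 0$, which pins $a'$ down to the value in (\ref{eq55}) and, crucially, implies $1 - aa' = \sigma_2^2 / (a^2 \sigma_1^2 + \sigma_2^2) \neq 0$ whenever $\sigma_2^2 > 0$. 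Hence all four coefficients $a$, $1$, $1-aa'$, $-a'$ attached to $e_1$ and $e_2$ in the two linear forms are nonzero.

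Applying Proposition \ref{p1} (the Darmois--Skitovich theorem) to the independent pair $\xi_1 = e_1$, $\xi_2 = e_2$ with coefficient vectors $(\alpha_1, \alpha_2) = (a, 1)$ and $(\beta_1, \beta_2) = (1-aa', -a')$ then yields that every $\xi_i$ with $\alpha_i \beta_i \neq 0$ is Gaussian --- and in the present setup this covers both $e_1$ and $e_2$. Therefore $X_1 = e_1$ and $X_2 = a X_1 + e_2$ are both Gaussian, completing the contraposition: if at least one of $X_1, X_2$ is non-Gaussian, then the two models cannot simultaneously satisfy their noise-independence constraints, and the true causal direction is uniquely recoverable by testing independence of residuals.

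The main obstacle is the nondegeneracy bookkeeping: one must verify carefully that $1 - aa' \neq 0$, which reduces to $\sigma_2^2 > 0$, and separately handle the trivial degenerate case $\sigma_2 = 0$ in which $X_2$ is a deterministic linear function of $X_1$ and identifiability holds trivially. Beyond this, the argument is essentially algebraic manipulation packaged around an invocation of the Darmois--Skitovich theorem already stated as Proposition \ref{p1}.
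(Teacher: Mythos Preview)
Your proposal is correct and follows essentially the same route as the paper: both argue by contraposition, express $e_1',e_2'$ as linear forms in the independent pair $(e_1,e_2)$ via (\ref{eq33})--(\ref{eq44}), use (\ref{eq55}) to ensure all four coefficients $a,1,1-aa',-a'$ are nonzero, and then invoke the Darmois--Skitovich theorem (Proposition~\ref{p1}) to force $e_1,e_2$ (hence $X_1,X_2$) to be Gaussian. Your version is slightly more careful in explicitly flagging the degenerate case $\sigma_2^2=0$, but otherwise the arguments coincide.
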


We can also identify the causal orders among multiple random variables. Suppose there are three linearly related random variables $X_1, X_2, X_3$ with zero means. 
Then, six potential causal orders exist, for instance, $X_2\rightarrow X_1\rightarrow X_3$, and $X_3\rightarrow X_2\rightarrow X_1$. First, we determine the top of them. 
Assuming $X_1$ is independent of $\{X_2-aX_1,X_3-a'X_1\}$ for $a,a'\in\mathbb{R}$, which means $X_1$ is the top variable. 
Furthermore, suppose that $X_2-aX_1$ is independent of $X_3-a'X_1-a''(X_2-aX_1)$ for some $a''\in {\mathbb R}$, then regarding the $X_2$ as the middle and $X_3$ as the bottom. We obtain the causal order $X_1\rightarrow X_2\rightarrow X_3$.
Following the steps, we can identify the causal order for $X_1, X_2, X_3$. 
Furthermore, we can estimate the causal order for an arbitrary number of random variables like 
$$X_i=\sum_{j=1}^{i-1}b_{i,j}X_j+e_i$$
where $b_{i,j}\in {\mathbb R}$ and noise $e_i$ is non-Gaussian for $p$ random variables $X_1,\ldots,X_p$.

\subsection{Hilbert Spaces}

A Banach space is a complete normed vector space where completeness ensures that all Cauchy sequences converge within the space. It combines linearity, completeness, and the norm to provide a framework for studying mathematical structures and functions. More precisely, in our context, we consider the set of functions as a Hilbert space, denoted by $\mathscr{H}$. A Hilbert space is a Banach space equipped with an inner product that induces the norm, ensuring completeness.

We define a linear operator $T_{21}: \mathscr{H}_1\rightarrow \mathscr{H}_2$ over ${\mathbb R}$ as a mapping that satisfies the linearity property: $T_{21}(\alpha f+ \beta g)=\alpha T_{21}f+\beta T_{21}g$ for $f,g\in \mathscr{H}_1$ and $\alpha,\beta\in {\mathbb R}$. Furthermore, $T_{21}$ is said to be bounded if there exists a positive constant $C$ such that $\|T_{21}f\|_{2}\leq C\|f\|_1$ holds for all $f \in \mathscr{H}_1$. Here, $\|\cdot\|_1$, $\|\cdot\|_2$ denote the norms within $\mathscr{H}_1$, $\mathscr{H}_2$, respectively.

For any bounded operator $T_{21}: \mathscr{H}_1\rightarrow \mathscr{H}_2$, there exists its adjoint operator or dual operator, a unique bounded linear operator $T_{21}^*: \mathscr{H}_2\rightarrow \mathscr{H}_1$ such that the following equality holds: $\langle T_{21}f_1,f_2\rangle_2=\langle f_1,T_{21}^*f_2\rangle_1$ for $f_1\in \mathscr{H}_1$ and $f_2\in \mathscr{H}_2$. The operator $T_{21}^*$ is the adjoint operator of $T_{21}$. If $T_{21}=T_{21}^*$, we say that $T_{21}$ is self-adjoint. Moreover, if the dimension of $\mathscr{H}$ is finite, the self-adjoint operator $T_{21}$ is symmetric.

\subsection{Random functions}

Functional data analysis involves considering each individual element of data as a random function. These functions are defined over a continuous physical continuum, which is typically time but can also be spatial location, wavelength, probability, or other dimensions. Functionally, these data are infinite-dimensional. Random functions can be interpreted as random elements that take values in a Hilbert space or as stochastic processes. The former approach provides mathematical convenience, while the latter is more suitable for practical applications. These two perspectives align when the random functions are continuous and satisfy a mean-squared continuity condition.

Formally speaking, if a mapping $X: \Omega \rightarrow {\mathbb R}$ is measurable from a probability space 
$(\Omega,{\cal F},\mu)$ to $({\mathbb{R}},{\mathcal {B}}({\mathbb{R}}))$, then it is a random variable:
$$B\in {\mathcal {B}}({\mathbb {R}}) \Longrightarrow   \{\omega\in \Omega|X(\omega)\in B \} \in {\cal F}\ ,$$
with the Borel sets $\mathcal{B}({\mathbb {R}})$.
Similarly, if $\chi: \Omega \rightarrow \mathscr{H}$ is measurable 
 from $(\Omega,{\cal F},\mu)$ to $(\mathscr{H},{\cal B}(\mathscr{H}))$, then it is a random function (or random element) in a Hilbert space $\mathscr{H}$:
 $$B\in {\mathcal {B}}({\mathbb {\mathscr{H}}}) \Longrightarrow   \{\omega\in \Omega|X(\omega)\in B \} \in {\cal F}\ ,$$
with the Borel sets $\cal B(\mathscr{H})$ w.r.t. the norm of $\mathscr{H}$.
Let $E$ be one set, we suppose that every entry $f$ of $\mathscr{H}$ is a function 
$f: E\ni x \mapsto f(x) \in \mathbb R$.

The mean of the random function $\chi$ is defined using the Bochner integral\footnote{See the definition of the Bochner integral in \cite{fdabook}.} as $\int_\Omega \chi d\mu$, under the condition that the expectation of $\|\chi\|$ is bounded.
Moreover, if the means of $\chi_1,\chi_2$ in $\mathscr{H}$ are $m$,  
we give the definition of the covariance operator $\mathscr{K}: \mathscr{H}\rightarrow \mathscr{H}$ of random functions $\chi_1,\chi_2$ when $\mathscr{H}:=\mathscr{H}_1=\mathscr{H}_2$: 
$$\langle \mathscr{K} g_1,g_2\rangle=\langle \int_{\Omega}\langle \chi_1-m,g_1\rangle(\chi_2-m)\rangle d\mu,g_2\rangle=
\int_{\Omega} \langle \chi_1-m,g_1\rangle\langle \chi_2-m, g_2\rangle d\mu\ ,
$$ 
for $g_1, g_2\in \mathscr{H}$.
By using orthonormal bases $\{e_i\}$ in $\mathscr{H}$, we can compute the covariance values $\langle \mathscr{K} e_i,e_j\rangle$ for all pairs of indices $i$ and $j$.
Generally, if $\chi_1\ci\chi_2$, 
then we get $\langle \mathscr{K}\ g_1,g_2\rangle=0$ for $g_1,g_2\in \mathscr{H}$.

In the context where each element in $\mathscr{H}$ is a mapping from $E$ to $\mathbb{R}$, a random function $\chi: \Omega\rightarrow \mathscr{H}$ takes values $\chi(\omega, x)\in \mathbb{R}$ for each $\omega\in \Omega$ and $x\in E$. Furthermore, if we fix $\omega\in \Omega$, $\chi(\omega, \cdot)$ represents a random function from $E$ to $\mathbb{R}$.
Henceforth, we adopt the notation $\chi(\cdot)$ to represent the random function $\chi(\omega, \cdot)$. This convention is analogous to the simplification employed for random variables, where $X(\omega)$ is denoted as $X$.
Note that a mean $m$ random function $\chi$ is referred as a Gaussian process if for any $n\geq 1$, the random vector $[\chi(x_1), \ldots, \chi(x_n)]$ of length $n$ follows a Gaussian distribution with mean $[m(x_1), \ldots, m(x_n)]$, $x_1, \ldots, x_n \in E$.

When the Hilbert space $\mathscr{H}$ has a finite dimension $d$, the covariance operator can be represented by a covariance matrix, denoted as $\Sigma \in \mathbb{R}^{d \times d}$. This matrix is positive definite. Consequently, we can define the eigenvalues $\{\lambda_i\}$ and eigenvectors $\{\phi_i\}$ of $\Sigma$. Each vector in $\mathscr{H}$ can be expressed as a linear combination of the eigenvectors, specifically as $\sum_{i=1}^d \langle X, \phi_i \rangle \phi_i$. Moreover, for $\langle X, \phi_i \rangle$, the variance is given by $\lambda_i$. Then, for random function $\chi$,
if $\mathscr{H}$ is an infinite-dimensional function space, 
\begin{proposition}[\cite{fdabook}]\label{p3}
Let $\{\lambda_i\}$ and $\{\phi_i\}$ denote the eigenvalues and eigenfunctions obtained from the eigenvalue problem $\mathscr{K} \phi_i = \lambda_i \phi_i$, $i=1,2,\ldots$. With probability one, $\chi$ can be represented as:
$$
\chi = \sum_{i=1}^\infty \langle \chi, \phi_i \rangle_\mathscr{H} \phi_i,
$$
where $\langle \chi, \phi_i \rangle_\mathscr{H}$ denotes the inner product between $\chi$ and $\phi_i$ in $\mathscr{H}$. Additionally, mean of $\chi$ is zero, and for $\langle \chi, \phi_i \rangle_\mathscr{H}$, the variance is equal to $\lambda_i$.
\end{proposition}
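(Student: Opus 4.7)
The plan is to prove this Karhunen--Loève-type expansion by combining the spectral theorem for compact self-adjoint operators with a second-moment convergence argument. First I would verify the structural properties of the covariance operator $\mathscr{K}$ defined earlier in the excerpt: (i) self-adjointness, which follows directly from the symmetry of the bilinear form $(g_1,g_2)\mapsto \int_\Omega \langle \chi,g_1\rangle\langle \chi,g_2\rangle d\mu$; (ii) positivity, since $\langle \mathscr{K} g,g\rangle=\int_\Omega \langle \chi,g\rangle^2 d\mu\geq 0$; and (iii) trace-class, hence compact, under the standing assumption that $\mathbb{E}\|\chi\|^2<\infty$, since the trace equals $\sum_i \langle \mathscr{K} e_i,e_i\rangle = \mathbb{E}\|\chi\|^2$ for any orthonormal basis $\{e_i\}$.

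Once compactness is in hand, I would apply the spectral theorem for compact self-adjoint operators on a separable Hilbert space to obtain a countable orthonormal system $\{\phi_i\}$ of eigenfunctions with non-negative eigenvalues $\lambda_1\geq \lambda_2\geq \cdots \geq 0$, which can be extended to a complete orthonormal basis of $\mathscr{H}$ by adjoining an orthonormal basis of $\ker \mathscr{K}$. For indices corresponding to $\lambda_i=0$, one has $\operatorname{Var}(\langle \chi,\phi_i\rangle)=\langle \mathscr{K}\phi_i,\phi_i\rangle=0$, so $\langle \chi,\phi_i\rangle=0$ almost surely, meaning these terms contribute nothing to the expansion and can be harmlessly absorbed into the sum.

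Next I would compute the coefficient variances directly from the definition of $\mathscr{K}$: assuming $\chi$ has mean zero,
\begin{equation*}
\operatorname{Var}(\langle \chi,\phi_i\rangle_\mathscr{H})=\int_\Omega \langle \chi,\phi_i\rangle^2 d\mu=\langle \mathscr{K}\phi_i,\phi_i\rangle=\lambda_i\|\phi_i\|^2=\lambda_i.
\end{equation*}
Then I would establish the expansion by a Parseval/second-moment argument: for every $\omega$ at which $\chi(\omega)\in \mathscr{H}$, completeness of $\{\phi_i\}$ gives $\chi(\omega)=\sum_i \langle \chi(\omega),\phi_i\rangle \phi_i$ in the norm of $\mathscr{H}$; and since $\mathbb{E}\|\chi\|^2=\sum_i \lambda_i<\infty$, the set $\{\omega:\chi(\omega)\in \mathscr{H}\}$ has full measure. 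To make the convergence fully rigorous I would also verify $\mathbb{E}\bigl\|\chi-\sum_{i=1}^N \langle \chi,\phi_i\rangle \phi_i\bigr\|^2 = \sum_{i>N}\lambda_i \to 0$, giving $L^2$-convergence as a byproduct.

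The main obstacle I anticipate is the passage from the abstract Hilbert-space setting to the almost-sure statement: one must pin down that the eigenfunctions not corresponding to strictly positive eigenvalues can be ignored without losing components of $\chi$, and that the exceptional null set in "with probability one" comes solely from the possibility that $\chi$ falls outside $\mathscr{H}$. Establishing the trace-class property of $\mathscr{K}$ under the integrability hypothesis on $\|\chi\|$, and carefully invoking the Bochner integrability of $\chi$ so that the interchange of sum and integral used to compute $\operatorname{Var}(\langle \chi,\phi_i\rangle)$ is justified, are the technical points where care is most needed; everything else reduces to standard Hilbert-space and spectral-theorem machinery.
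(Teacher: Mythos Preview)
The paper does not actually prove Proposition~\ref{p3}; it is quoted as a known result from \cite{fdabook} (the Karhunen--Lo\`eve expansion) and no argument is given in the text. Your proposal supplies the standard textbook proof---spectral theorem for the compact self-adjoint covariance operator, variance computation via the defining quadratic form, and Parseval/mean-square convergence---and is correct in outline.

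One small correction to your closing paragraph: the exceptional null set does \emph{not} come from the event $\{\chi\notin\mathscr{H}\}$, since by definition $\chi:\Omega\to\mathscr{H}$ takes values in $\mathscr{H}$ everywhere. Rather, the ``with probability one'' arises because the displayed sum ranges only over eigenfunctions of $\mathscr{K}$, and if $\ker\mathscr{K}\neq\{0\}$ one must discard, for each basis element $\phi_i$ of the kernel, the null set where $\langle\chi,\phi_i\rangle\neq 0$; the countable union of these null sets is the exceptional set. You already identified this mechanism earlier in the proposal, so the final paragraph just needs to be brought in line with it.
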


It is important to note the close relationship between stochastic processes and random functions. A set of random variables $\{X(t)\}_{t\in E}$ can be considered a stochastic process if the function $X: \Omega \times E \rightarrow {\mathbb R}$ is measurable with respect to the probability space $(\Omega, \mathcal{F}, \mu)$ and the measurable space $({\mathbb R}, \mathcal{B}({\mathbb R}))$ for each $t\in E$. It is worth mentioning that certain stochastic processes can also be regarded as random functions \citep{fdabook}.
\section{Extension to Functional Data}\label{s3}
In this section, we generalize the concept of LiNGAM from random variables to encompass both random vectors and random functions.

Previous works have extended the D-S to encompass various scenarios. These extensions include incorporating random vectors \citep{MDS} and random functions in a Banach space \citep{DSB} as substitutes for random variables.
\subsection{LiNGAM for Random Vectors}
 As shown by \cite{Direct}, the identifiability of non-Gaussian random variables is outlined in Proposition \ref{P2}. However, this proposition does not extend to the case of random vectors or random functions. This section provides proof of identifiability for non-Gaussian random vectors.

 \begin{proposition}[Multivariate Darmois-Skitovich \citep{MDS}]\label{p5}	
  Let $ {L}_{1}=   \sum_{i=1}^{m} {A}_{i} {\xi}^{i} $ and $ {L}_{2}=\sum_{i=1}^{m} {B}_{i} {\xi}^{i} $ with mutually independent $ k $-dimensional random vectors $ {\xi}^{i} $ and invertible matrices {$ {A}_{i}, {B}_{i}$} for $ i=1, \ldots, m $. If $ {L}_{1} $ and $ {L}_{2} $ are mutually independent, then all $ {\xi}^{i} $ are Gaussian.
	\end{proposition}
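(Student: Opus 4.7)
The plan is to adapt the classical characteristic-function proof behind Proposition \ref{p1} to the vector setting. For $t\in\mathbb{R}^k$, let $\phi_i(t)=\mathbb{E}[\exp(i\langle t,\xi^i\rangle)]$. Mutual independence of the $\xi^i$ combined with $L_1\ci L_2$ factorizes the joint characteristic function of $(L_1,L_2)$ and produces the identity
\[
\prod_{i=1}^m\phi_i\bigl(A_i^{\top}s+B_i^{\top}t\bigr)=\prod_{i=1}^m\phi_i(A_i^{\top}s)\cdot\prod_{i=1}^m\phi_i(B_i^{\top}t),\qquad s,t\in\mathbb{R}^k.
\]
On a product neighborhood $\mathcal{N}\times\mathcal{N}$ of the origin on which each $\phi_i$ is nonzero, I would take principal-branch logarithms and set $\psi_i=\log\phi_i$, reducing the identity to the additive Pexider-type form
\[
\sum_{i=1}^m\psi_i\bigl(A_i^{\top}s+B_i^{\top}t\bigr)=\sum_{i=1}^m\psi_i(A_i^{\top}s)+\sum_{i=1}^m\psi_i(B_i^{\top}t).
\]

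Next I would apply mixed finite-difference operators $\Delta_h^{s}\Delta_{h'}^{t}$ in the two vector variables. Every summand on the right-hand side depends on only one of $s$ or $t$, so any mixed difference annihilates the right-hand side; the left-hand side becomes a sum of two-point differences of $\psi_i$ with shifts $A_i^{\top}h$ and $B_i^{\top}h'$. Invertibility of the $A_i,B_i$ is what makes the elimination work: it lets me choose difference vectors $h,h'$ whose images $A_i^{\top}h$ and $B_i^{\top}h'$ avoid any specified proper subspace, so that by inducting on $i$ I can peel the $\psi_i$ off one at a time. The conclusion is that every iterated mixed finite difference of each $\psi_i$ of sufficiently high order vanishes identically on $\mathcal{N}$, and hence each $\psi_i$ coincides there with a polynomial of bounded total degree in $k$ variables.

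The final step is a multivariate Marcinkiewicz-type argument: restricting to one-dimensional sections $\tau\mapsto\psi_i(\tau u)$ for arbitrary $u\in\mathbb{R}^k$, each section is a scalar log-characteristic function that is polynomial near the origin, hence of degree at most $2$ by the classical scalar Marcinkiewicz theorem. This forces $\psi_i(t)=i\langle\mu_i,t\rangle-\tfrac12\langle t,\Sigma_i t\rangle$ for some $\mu_i\in\mathbb{R}^k$ and positive semidefinite $\Sigma_i$, identifying $\xi^i$ as Gaussian. I expect the hard part to be the bookkeeping of the second step: organizing the finite-difference elimination so that the invertibility of $A_i,B_i$ translates into an inductive peeling that isolates one $\psi_i$ at a time is noticeably more subtle than in the scalar Darmois--Skitovich proof, because the shifts $A_i^{\top}h,B_i^{\top}h'$ live in $\mathbb{R}^k$ and accidental cancellations across distinct $\psi_i$ must be ruled out by a genericity argument on $h$ and $h'$, using only the assumption that every $A_i$ and every $B_i$ is invertible rather than any coordinate-wise nondegeneracy.
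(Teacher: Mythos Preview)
The paper does not prove Proposition~\ref{p5}; it is stated with a citation to \cite{MDS} and used as a black box in the proof of Theorem~\ref{t1}. There is therefore no in-paper proof to compare your proposal against.

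That said, your outline is the standard route to the multivariate Darmois--Skitovich theorem and is essentially how the cited literature proceeds: factorize the joint characteristic function, pass to second characteristic functions $\psi_i=\log\phi_i$ near the origin, apply iterated finite differences to kill one variable at a time, and finish with Marcinkiewicz to force degree at most two. Your identification of the delicate step is accurate: the inductive peeling that isolates a single $\psi_i$ relies on choosing increments so that the linear maps $A_i^{\top},B_i^{\top}$ carry them to generic positions, and invertibility is exactly what guarantees this can be done. One caution: your sketch says ``every iterated mixed finite difference of each $\psi_i$ of sufficiently high order vanishes,'' but the argument more naturally shows that a specific iterated difference of the \emph{sum} vanishes, and then an induction on $m$ (fixing all but one increment to annihilate the other summands) isolates the individual $\psi_i$; make sure your write-up orders the quantifiers that way. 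Otherwise the plan is sound and matches the classical treatment.
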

	Now we consider the identifiability of the following model when $x,y\in\mathbb{R}^m$ and invertible matrix $A\in\mathbb{R}^{m\times m}$, $e_1\ci e_2$ and zero means,
	\begin{equation}
		\begin{aligned}
			& x = \epsilon_1,      \ \ \ \  & y &= \epsilon_1', \\
			& y = Ax + \epsilon_2, \ \ \ \  & x &= A'y + \epsilon_2',\\
			&\epsilon_1' = A\epsilon_1+\epsilon_2,\ \ 
			&\epsilon_2'& = (I-A'A)\epsilon_1-A'\epsilon_2.
		\end{aligned}
	\end{equation}
We assume 
\begin{equation}\label{eq233}
A\ {\rm or}\ A'\ {\rm is\ invertible}.
\end{equation}
Then, we have the following theorem.
 \begin{theorem}
\label{t1}\rm
Assuming (\ref{eq233}), which extends (\ref{eq23}), we can identify the causal order between random vectors {$X_1, X_2:\Omega\rightarrow\mathbb{R}^m$ of dimension $m\in [1,\infty)$} if and only if at least one of them is non-Gaussian.
\end{theorem}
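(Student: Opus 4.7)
My plan is to mimic the scalar identifiability argument for Proposition \ref{P2}, replacing the univariate Darmois--Skitovich result by its multivariate analog (Proposition \ref{p5}) and tracking matrix analogs of the scalar identities (\ref{eq33})--(\ref{eq55}). Throughout I write $X_1 = \epsilon_1$ and denote $\Sigma_1 := \mathrm{Cov}(\epsilon_1)$, $\Sigma_2 := \mathrm{Cov}(\epsilon_2)$, adopting the standard non-degeneracy $\Sigma_1, \Sigma_2 \succ 0$.

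For the sufficiency direction (both $X_1, X_2$ Gaussian $\Rightarrow$ not identifiable), I would mirror the scalar derivation: assume $(X_1, \epsilon_2)$ is jointly Gaussian so that $X_2$ is Gaussian, then seek $A'$ making $\epsilon_2' := X_1 - A'X_2$ uncorrelated with $X_2$. Setting $\mathrm{Cov}(\epsilon_2', X_2) = 0$ gives the vector analog of (\ref{eq55}),
\begin{equation*}
A' \;=\; \Sigma_1 A^{\top}\bigl(A \Sigma_1 A^{\top} + \Sigma_2\bigr)^{-1},
\end{equation*}
where $A \Sigma_1 A^{\top} + \Sigma_2$ is invertible because $\Sigma_2 \succ 0$. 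Joint Gaussianity of $(X_2, \epsilon_2')$ as a linear image of $(X_1, \epsilon_2)$ converts zero covariance into independence, producing the alternative representation $X_1 = A' X_2 + \epsilon_2'$ with $X_2 \ci \epsilon_2'$.

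For the necessary direction I argue the contrapositive: assume both LiNGAM representations hold simultaneously with $X_1 \ci \epsilon_2$ and $X_2 \ci \epsilon_2'$, and conclude $X_1, X_2$ must be Gaussian. Substituting yields
\begin{equation*}
L_1 := X_2 = A\,\epsilon_1 + I\,\epsilon_2, \qquad L_2 := \epsilon_2' = (I - A'A)\,\epsilon_1 + (-A')\,\epsilon_2,
\end{equation*}
with $\epsilon_1 \ci \epsilon_2$ and $L_1 \ci L_2$ by hypothesis. Applying Proposition \ref{p5} to these two linear forms in the independent vectors $\epsilon_1, \epsilon_2$ forces $\epsilon_1, \epsilon_2$, and hence $X_1, X_2$, to be Gaussian.

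The main obstacle is the invertibility hypothesis of Proposition \ref{p5}: all four coefficient matrices $\{A, I, I-A'A, -A'\}$ must be invertible, whereas (\ref{eq233}) only furnishes one of $A, A'$. I would close this gap as follows. The weaker consequence $\mathrm{Cov}(L_1, L_2) = 0$ of $L_1 \ci L_2$ forces the same matrix identity $A' = \Sigma_1 A^{\top}(A\Sigma_1 A^{\top} + \Sigma_2)^{-1}$ without invoking Gaussianity. Combined with $\Sigma_1, \Sigma_2 \succ 0$, this identity shows $A$ is invertible if and only if $A'$ is, so either branch of (\ref{eq233}) delivers invertibility of both. For the remaining matrix, the Woodbury/Schur-complement identity $(I - A'A)\Sigma_1 = (\Sigma_1^{-1} + A^{\top} \Sigma_2^{-1} A)^{-1} \succ 0$ certifies invertibility of $I - A'A$. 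With all four matrices invertible, Proposition \ref{p5} applies and the theorem follows.
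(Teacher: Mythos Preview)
Your proposal is correct and follows essentially the same route as the paper: compute the covariance between $\epsilon_1'=A\epsilon_1+\epsilon_2$ and $\epsilon_2'=(I-A'A)\epsilon_1-A'\epsilon_2$, solve $\mathrm{Cov}(\epsilon_1',\epsilon_2')=0$ for $A'=\Sigma_1A^{\top}(A\Sigma_1A^{\top}+\Sigma_2)^{-1}$, use joint Gaussianity to pass from zero covariance to independence for the ``if'' direction, and invoke the multivariate Darmois--Skitovich theorem (Proposition~\ref{p5}) for the ``only if'' direction.

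The one substantive difference is that the paper's proof is quite terse and simply cites Proposition~\ref{p5} without checking its hypothesis that \emph{all} coefficient matrices are invertible. You explicitly close this gap: from the covariance identity and $\Sigma_1,\Sigma_2\succ 0$ you deduce that $A$ is invertible iff $A'$ is (so assumption~(\ref{eq233}) gives both), and the Woodbury identity $(I-A'A)\Sigma_1=(\Sigma_1^{-1}+A^{\top}\Sigma_2^{-1}A)^{-1}\succ 0$ gives invertibility of $I-A'A$. This verification is not spelled out in the paper but is needed to apply Proposition~\ref{p5} legitimately, so your argument is in fact a cleaner execution of the same plan.
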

	\begin{proof}
		Since $\epsilon_{1} \perp\!\!\!\perp \epsilon_{2}, E \epsilon_{1}^{\prime}=E \epsilon_{2}^{\prime}=0 $, and they are {Gaussian random vectors with covariance matrix $\Sigma_1,\Sigma_2$, respectively}. Then the correlated coefficient $\rho=0 \Longleftrightarrow Cov(\epsilon_{1}',\epsilon_{2}')=A\Sigma_1\left(I-A^TA^{\prime T}\right) -\Sigma_{2}A'^T=0\Longleftrightarrow\epsilon_1'\perp\!\!\!\perp \epsilon_2'$, 
		that is, when $A^{\prime}=\Sigma_1A^T(A\Sigma_1A^T+\Sigma_2)^{-1}$, the causal relation between $x,y$ is unable to be identified. This also satisfies the condition of $\epsilon_1'\perp\!\!\!\perp \epsilon_2'$ is that they follow the
		Gaussian distribution from the Proposition \ref{p5}. 
	\end{proof}

\subsection{LiNGAM for Random Functions}
In this subsection, we present results that demonstrate identifiability can be achieved in non-Gaussian scenarios in infinite-dimensional Hilbert spaces as Theorem~\ref{t6}. In extending our approach to multivariate scenarios, we adopted methodologies from Lemmas 1 and 2 of DirectLiNGAM (refer to \citep{Direct}). This involved identifying the exogenous function (see Appendix) and using residuals for causal ordering, paralleling the process in Direct-LiNGAM. Owing to the procedural similarities with multivariate functions, we omitted a detailed proof in our main text, choosing to apply these established principles to our context. We have included the preliminary proof in Section~\ref{AB} for clarity.

Let $\mathscr{H}_1, \mathscr{H}_2$ be Hilbert spaces. Assume that there are two causal models
for $f_1\in \mathscr{H}_1$ and $f_2\in \mathscr{H}_2$,
\begin{equation}\label{eq19}
	\begin{aligned}
		& f_1 = h_1, \quad  & f_2 &= T_{21}f_1+h_2, \\
		& f_2=h_1', \quad & f_1&=T_{12}f_2 + h_2'.
	\end{aligned}
\end{equation}
where random functions $\{h_1,h_2'\}\in\mathscr{H}_1$ and $\{h_1',h_2\}\in\mathscr{H}_2$. We also assume the covariance operator $\mathscr{K}_{11}$ of $h_1$, $\mathscr{K}_{22}$ of $h_2$ have positive eigenvalues ($>0$). The $T_{12}: \mathscr{H}_2\rightarrow \mathscr{H}_1$, $T_{21}: \mathscr{H}_1\rightarrow \mathscr{H}_2$
are linear bounded operators between $\mathscr{H}_1,\mathscr{H}_2$,  and we identify the order by examining whether $h_2\ci f_1$ or $h_1\ci f_2$.

A bounded linear operator $T: \mathscr{H}_1\rightarrow \mathscr{H}_2$ is considered continuous if the set $\{T(f)|f\in U\}\subseteq \mathscr{H}_2$ is open for any subset $U\subseteq \mathscr{H}_1$. Similarly, the inverse image $U$ is also open. Furthermore, an operator $T: \mathscr{H}_1\rightarrow \mathscr{H}_2$ is said to be invertible if it is both one-to-one (injective) and onto (surjective).

Let's confirm the statements before proceeding with our discussion:
\begin{itemize}
\item Proposition \ref{p6}: There is an equivalence between independence and non-correlation for jointly Gaussian random functions. In other words, if $\chi_1$ and $\chi_2$ are jointly Gaussian random functions, they are independent if and only if they are uncorrelated.

\item Proposition \ref{p2}: The Darmois-Skitovich (D-S) theorem can be extended to random functions in Banach spaces.

\end{itemize}

The following Proposition \ref{p6} establishes the equivalence between independence and non-correlation for random functions in Banach spaces, which also includes Hilbert spaces as a special case. 
\begin{proposition}[\cite{IC}]\label{p6}
Suppose $\chi,\chi'$ are joint Gaussian in Banach spaces. Then, $\chi\ci\chi'$ if and only if they are uncorrelated. 
\end{proposition}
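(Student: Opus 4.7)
The plan is to reduce the Banach-space statement to the well-known finite-dimensional fact (jointly Gaussian random vectors are independent iff their cross-covariance vanishes) by testing against continuous linear functionals and then invoking uniqueness of the characteristic functional. The forward direction (independence implies uncorrelatedness) is immediate from Fubini applied to $\mathbb{E}[\langle f^*,\chi\rangle\langle g^*,\chi'\rangle]$ for arbitrary $f^* \in \mathscr{H}_1^*$ and $g^* \in \mathscr{H}_2^*$, assuming the usual integrability that a Gaussian measure on a separable Banach space provides, so the core work lies in the converse.

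For the converse, suppose $(\chi,\chi')$ is jointly Gaussian in $\mathscr{H}_1 \times \mathscr{H}_2$ and uncorrelated. The first step is to recall that, by definition, joint Gaussianity means that for every pair $(f^*,g^*)$ the scalar
\begin{equation*}
\langle f^*,\chi\rangle + \langle g^*,\chi'\rangle
\end{equation*}
is a real Gaussian random variable; in particular $\langle f^*,\chi\rangle$ and $\langle g^*,\chi'\rangle$ are individually Gaussian. The second step is to compute the variance of the above linear combination. Uncorrelatedness in the Banach setting means precisely that
\begin{equation*}
\mathrm{Cov}\bigl(\langle f^*,\chi\rangle,\ \langle g^*,\chi'\rangle\bigr)=0
\qquad \text{for all } f^* \in \mathscr{H}_1^*,\ g^* \in \mathscr{H}_2^*,
\end{equation*}
so this variance reduces to the sum of the individual variances.

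The third step is to apply the scalar-valued Gaussian fact (cited earlier in the paper) to $\langle f^*,\chi\rangle$ and $\langle g^*,\chi'\rangle$: being jointly Gaussian and uncorrelated, they are independent as real random variables. Consequently the joint characteristic functional factorizes,
\begin{equation*}
\mathbb{E}\exp\!\bigl(i\langle f^*,\chi\rangle+i\langle g^*,\chi'\rangle\bigr)
=\mathbb{E}\exp\!\bigl(i\langle f^*,\chi\rangle\bigr)\,\mathbb{E}\exp\!\bigl(i\langle g^*,\chi'\rangle\bigr),
\end{equation*}
for every choice of $f^*$ and $g^*$. The fourth and final step is to invoke the fact that the characteristic functional of a tight Borel measure on a (separable) Banach space uniquely determines that measure: the displayed factorization therefore forces the joint law of $(\chi,\chi')$ to be the product of the marginals, which is exactly $\chi \ci \chi'$.

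The main obstacle I anticipate is not any single computation but rather the careful setup: one has to pin down the precise Banach-space definitions of \emph{jointly Gaussian} and \emph{uncorrelated} (via cylinder sets / the topological dual), and one has to justify that the characteristic functional is distribution-determining in this setting, typically under the implicit separability assumption used throughout the paper. Once these foundational points are made explicit, the proof itself is a short chain of reductions to the scalar Gaussian case, and so I would devote most of the write-up to fixing these conventions and citing the appropriate uniqueness theorem for characteristic functionals on Banach spaces.
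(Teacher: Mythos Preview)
Your argument is the standard and correct one: reduce to scalar Gaussians via continuous linear functionals, use the scalar ``uncorrelated $\Rightarrow$ independent'' fact, factor the characteristic functional, and conclude by uniqueness of characteristic functionals on a separable Banach space. The caveats you flag (separability, the precise definition of joint Gaussianity and of uncorrelatedness through the dual) are exactly the ones that need to be stated.

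However, there is nothing to compare against: in the paper this proposition is not proved at all. It is quoted as a known result from the cited reference \cite{IC} and then used as a black box inside the proof of Theorem~\ref{t6}. So your write-up would be supplying a proof where the paper simply imports the statement. If your goal is to match the paper, a one-line citation suffices; if your goal is a self-contained manuscript, your outline is fine and only needs the definitions and the uniqueness theorem for characteristic functionals made explicit.
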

\begin{proposition}[Darmois-Skitovich in Banach Space\citep{DSB}]\label{p2}
Suppose that $n\geq 2$, and random functions $\xi_{1},\ldots,\xi_n$ are in a Banach space. Let $L_1=\sum_{i=1}^{m} A_{i} \xi_{i}$, 
$L_2=\sum_{i=1}^{m}B_{i} \xi_{i}$ with some continuous linear bounded operators $A_1,\ldots,A_m$, and $B_1,\ldots,B_m$. If $L_1\ci L_2$, then $\xi_{i}$ is a Gaussian process for $i=1,\ldots,m$ with invertible $A_{i}, B_i$.
\end{proposition}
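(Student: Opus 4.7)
The plan is to reduce the Banach-space statement to a functional equation on characteristic functionals, then use invertibility of the $A_i, B_i$ to isolate a single index, and finish via a Marcinkiewicz-type argument for log-characteristic functionals.

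First, I would translate the hypothesis $L_1 \ci L_2$ into factorization of the joint characteristic functional on the dual $B^{*}$: for all $g, h \in B^{*}$,
\[
\mathbb{E}\bigl[\exp(i g(L_1) + i h(L_2))\bigr]
= \mathbb{E}\bigl[\exp(i g(L_1))\bigr]\,\mathbb{E}\bigl[\exp(i h(L_2))\bigr].
\]
Substituting $L_1 = \sum_{i} A_i \xi_i$ and $L_2 = \sum_{i} B_i \xi_i$, using independence of $\{\xi_i\}$, and writing $\phi_i(u) := \mathbb{E}[\exp(i u(\xi_i))]$ for the characteristic functional of $\xi_i$ on $B^{*}$, this becomes
\[
\prod_{i=1}^{m} \phi_i(A_i^{*} g + B_i^{*} h)
= \prod_{i=1}^{m} \phi_i(A_i^{*} g)\,\prod_{i=1}^{m} \phi_i(B_i^{*} h),
\]
the Skitovich--Darmois functional equation, but with the adjoints $A_i^{*}, B_i^{*}$ on $B^{*}$ playing the role that scalar coefficients play in Proposition~\ref{p1}.

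Next, taking logarithms on a neighborhood of the origin where each $\phi_i$ is continuous and nonzero, and writing $\psi_i := \log \phi_i$, I would obtain
\[
\sum_{i=1}^{m} \bigl[\psi_i(A_i^{*} g + B_i^{*} h) - \psi_i(A_i^{*} g) - \psi_i(B_i^{*} h)\bigr] = 0.
\]
Because the $A_i$ and $B_i$ are continuous linear bijections between Banach spaces, the open mapping theorem guarantees boundedness of $A_i^{-1}$ and $B_i^{-1}$, so the adjoints $A_i^{*}$ and $B_i^{*}$ are bounded linear bijections of $B^{*}$. This is the lever that lets one isolate a single index $i_0$: applying appropriate second-order finite differences in $g$ and $h$, and then substituting freely via the inverses of the adjoints, every term except $\psi_{i_0}$ cancels, leaving a functional equation asserting that $\psi_{i_0}$ has vanishing third-order increments on $B^{*}$.

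Finally, by a Banach-space analogue of Marcinkiewicz's theorem, this forces $\psi_{i_0}(u) = i \mu_{i_0}(u) - \tfrac{1}{2} Q_{i_0}(u,u)$ for some continuous linear functional $\mu_{i_0}$ and continuous positive semidefinite bilinear form $Q_{i_0}$. That is precisely the log-characteristic-functional of a Gaussian random element in a Banach space, so $\xi_{i_0}$ is Gaussian; since $i_0$ was arbitrary, every $\xi_i$ is a Gaussian process. The main obstacle is carrying out the third-order differencing rigorously on the infinite-dimensional space $B^{*}$: in the scalar setup of Proposition~\ref{p1} the coefficients are real numbers and one rescales trivially, whereas here one must exploit the simultaneous invertibility of all the $A_i^{*}$ and $B_i^{*}$ to zero out every term but $\psi_{i_0}$. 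Once that reduction is in place, the remainder parallels the scalar Skitovich--Darmois proof and the argument of \cite{DSB} almost line-for-line.
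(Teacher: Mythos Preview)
The paper does not prove Proposition~\ref{p2}; it is quoted verbatim as a known result from \cite{DSB} and used as a black box in the proof of Theorem~\ref{t6}. So there is no in-paper argument to compare against.

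That said, your sketch is the standard characteristic-functional route to Skitovich--Darmois theorems and is, in outline, what the source literature does: translate independence of $L_1,L_2$ into the multiplicative functional equation for the $\phi_i$ on the dual, pass to $\psi_i=\log\phi_i$ near the origin, and use iterated finite differences together with the invertibility of the $A_i^{*},B_i^{*}$ to force each $\psi_i$ to be a polynomial of degree at most two, hence the log-characteristic-functional of a Gaussian element. Two places deserve more care than you give them. First, the ``vanishing third-order increments implies quadratic'' step is not a direct corollary of the scalar Marcinkiewicz theorem; in infinite dimensions one typically reduces to one-dimensional sections (fixing a direction $u\in B^{*}$ and looking at $t\mapsto\psi_i(tu)$) and then invokes the scalar result, rather than appealing to a Banach-space Marcinkiewicz theorem wholesale. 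Second, the passage from ``$\psi_i$ is quadratic on a neighborhood of $0$'' to ``$\psi_i$ is quadratic globally, and the quadratic part is a bona fide covariance form'' needs the standard extension/positivity argument, which you do not mention. Neither of these is a fatal gap---they are routine in this literature---but as written the proposal reads more like a plan than a proof.
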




\begin{theorem}[\textbf{Causal 
Identifiability}]\label{t6}
If either $T_{12}$ or $T_{21}$ is invertible, the causal order between random functions in infinite-dimensional Hilbert spaces can be identified if and only if at least one of them is a non-Gaussian process.
\end{theorem}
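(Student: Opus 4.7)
My plan is to mirror the proof of Theorem~\ref{t1} at the operator level, replacing the matrix-level Darmois--Skitovich theorem by its Banach-space counterpart (Proposition~\ref{p2}) and the scalar ``uncorrelation equals independence'' fact by Proposition~\ref{p6}. First I would substitute the two candidate models from (\ref{eq19}) into each other to derive the operator identities
\begin{equation*}
h_1' = T_{21} h_1 + h_2, \qquad h_2' = (I - T_{12} T_{21}) h_1 - T_{12} h_2,
\end{equation*}
which are the functional analogues of (\ref{eq33})--(\ref{eq44}).

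For the forward direction (at least one non-Gaussian $\Rightarrow$ identifiable), I argue by contrapositive: if the causal order is not identifiable, then both models hold for suitable $T_{12}$ with independent noise, so $h_1 \ci h_2$ and $h_1' \ci h_2'$ hold simultaneously. Setting $L_1 := h_1'$ and $L_2 := h_2'$, the two independent random functions are then linear combinations of the independent $h_1, h_2$ with bounded coefficient operators $(T_{21}, I)$ and $(I - T_{12} T_{21}, -T_{12})$. Proposition~\ref{p2} then forces $h_1$ and $h_2$ to be Gaussian processes, provided all four coefficient operators are invertible. The hypothesis that one of $T_{12}, T_{21}$ is invertible, combined with the assumption that $\mathscr{K}_{11}, \mathscr{K}_{22}$ have strictly positive eigenvalues, would be used to rule out rank-deficient parameterisations and to conclude that the remaining operators $T_{12}$ and $I - T_{12} T_{21}$ are also invertible.

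For the reverse direction (both Gaussian $\Rightarrow$ not identifiable), I would explicitly construct a $T_{12}$ making $h_1' \ci h_2'$. Since $(h_1', h_2')$ is jointly Gaussian, Proposition~\ref{p6} reduces independence to the vanishing of the cross-covariance. A direct computation using $h_1 \ci h_2$ converts this into the operator equation
\begin{equation*}
T_{21}\mathscr{K}_{11} \;=\; \bigl(T_{21}\mathscr{K}_{11} T_{21}^{*} + \mathscr{K}_{22}\bigr) T_{12}^{*},
\end{equation*}
the infinite-dimensional analogue of (\ref{eq55}).

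The principal obstacle, and precisely the gap flagged in the introduction relative to the earlier version~\cite{TJ2022}, lies in solving this last equation. The operator $T_{21}\mathscr{K}_{11} T_{21}^{*} + \mathscr{K}_{22}$ is a sum of compact positive operators on an infinite-dimensional Hilbert space, hence itself compact with eigenvalues accumulating at zero; in particular, it admits no bounded inverse, so the naive formula $T_{12}^{*} = (T_{21}\mathscr{K}_{11} T_{21}^{*} + \mathscr{K}_{22})^{-1} T_{21}\mathscr{K}_{11}$ is not available. I plan to resolve this by verifying that the range of $T_{21}\mathscr{K}_{11}$ is contained in the range of $T_{21}\mathscr{K}_{11} T_{21}^{*} + \mathscr{K}_{22}$, so that $T_{12}^{*}$ is well-defined on the closed range and can be extended by zero on its orthogonal complement. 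Equivalently, one can pass through a finite-dimensional FPCA truncation, where the inversion is unambiguous, and then take a limit using the strict positivity of the eigenvalues of $\mathscr{K}_{11}$ and $\mathscr{K}_{22}$ to control the construction. Combined with Proposition~\ref{p6}, this yields $h_1' \ci h_2'$ and hence non-identifiability, completing the equivalence.
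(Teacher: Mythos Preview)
Your plan coincides with the paper's proof in structure, in the tools invoked (Propositions~\ref{p6} and~\ref{p2}), and even in flagging the central obstacle: the compactness of $T_{21}\mathscr{K}_{11}T_{21}^{*}+\mathscr{K}_{22}$ and the consequent need to replace a true inverse by a Moore--Penrose type construction based on a range-inclusion argument. The paper carries this out exactly as you suggest, proving $\operatorname{Im}(A)\subseteq\operatorname{Im}(A+B)$ for positive semidefinite $A,B$ via $\operatorname{ker}(A+B)\subseteq\operatorname{ker}(A)$ and then checking boundedness of $(A+B)^{\dagger}A$; it does not pursue your alternative FPCA-truncation-plus-limit route.

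The one place where your sketch is genuinely incomplete is the invertibility of $I-T_{12}T_{21}$ in the contrapositive direction. Saying that the hypothesis on $T_{12},T_{21}$ together with strict positivity of the eigenvalues of $\mathscr{K}_{11},\mathscr{K}_{22}$ will ``rule out rank-deficient parameterisations'' is not an argument in infinite dimensions: what is needed is that the spectral radius of $T_{12}T_{21}$ is strictly below~$1$. The paper extracts this from the very operator equation you wrote down. Since both models hold, the cross-covariance of $(h_1',h_2')$ vanishes, which forces (in the paper's form) $\mathscr{K}_{11}T_{21}^{*}=T_{12}\bigl(T_{21}\mathscr{K}_{11}T_{21}^{*}+\mathscr{K}_{22}\bigr)$; left-multiplying by $T_{21}$ yields
\[
T_{21}\mathscr{K}_{11}T_{21}^{*}=T_{21}T_{12}\bigl(T_{21}\mathscr{K}_{11}T_{21}^{*}+\mathscr{K}_{22}\bigr),
\]
and since $\mathscr{K}_{22}$ has strictly positive eigenvalues this forces every eigenvalue of $T_{21}T_{12}$, hence of $T_{12}T_{21}$, to be strictly less than~$1$, so $I-T_{12}T_{21}$ is invertible by the Neumann series. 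You should replace the hand-wave with this spectral step.
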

\begin{proof}
For the sufficiency, from (\ref{eq19}), we first assume the $f_1=h_1$, $f_2=T_{21}f_1+h_2$, and represent the noise functions $h_1',h_2'$ with $h_1,h_2$:
	\begin{equation}
		\begin{aligned}
			h_1'&=f_2=T_{21}h_1+h_2\\
			h_2'&=f_1-T_{12}f_2=h_1-T_{12}(T_{21}h_1+h_2)=(I-T_{12}T_{21})h_1-T_{12}h_2\ .
		\end{aligned}
	\end{equation}
Because $h_1',h_2'$ are formed as the linear combinations of two independent Gaussian random functions $h_1,h_2$, we can conclude that $h_1'$ and $h_2'$ are jointly Gaussian \citep{IC}. 
Then from Proposition \ref{p6}, the zero-correlation implies independence. 
Since $h_1\ci h_2$ and $h_1\in \mathscr{H}_1$, $h_2\in \mathscr{H}_2$,
the cross-covariance operator $\mathscr{K}_{12}$ is zero: $$\langle\mathscr{K}_{12}g_1, g_2\rangle_{H_2}=\int_\Omega \langle h_1,g_1\rangle_{\mathscr{H}_1}\langle h_2,g_2\rangle_{\mathscr{H}_2}=0$$ for any $g_1\in \mathscr{H}_1$,$g_2\in \mathscr{H}_2$. Then, 
the cross-covariance operator $\mathscr{K}_{12}'$ between $h_1'$ and $h_2'$ is
	\begin{equation}\label{eq21}
		\begin{aligned}
			&\langle \mathscr{K}'_{12}g_1,g_2\rangle_{\mathscr{H}_2}=\int_{\Omega}
			\langle (I-T_{12}T_{21})h_1-T_{12}h_2, g_1 \rangle_{\mathscr{H}_1} \langle T_{21}h_1+h_2, g_2\rangle_{\mathscr{H}_2}
			d\mu\\
			&=
			\int_{\Omega}\langle (I-T_{12}T_{21})h_1, g_1 \rangle_{\mathscr{H}_1} \langle T_{21}h_1, g_2\rangle_{\mathscr{H}_2}
			d\mu+
			\int_{\Omega}\langle -T_{12}h_2, g_1 \rangle_{\mathscr{H}_1} \langle h_2, g_2\rangle_{\mathscr{H}_2}
			d\mu
			\\
			&=
			\int_{\Omega}\langle h_1,  (I-T_{12}T_{21})^*g_1 \rangle_{\mathscr{H}_1} \langle h_1, T_{21}^* g_2\rangle_{\mathscr{H}_1}
			d\mu-
			\int_{\Omega}\langle h_2,  T_{12}^* g_1 \rangle_{\mathscr{H}_2} \langle h_2, g_2\rangle_{\mathscr{H}_2}d\mu\\
			&=
			\langle \mathscr{K}_{11}(I-T_{12}T_{21})^*g_1,T_{21}^*g_2\rangle_{\mathscr{H}_1}-\langle\mathscr{K}_{22}T_{12}^*g_1,g_2\rangle_{\mathscr{H}_2}\\
			&=\langle T_{21}\mathscr{K}_{11}(I-T_{21}^*T_{12}^*)g_1,g_2\rangle_{\mathscr{H}_2}-\langle\mathscr{K}_{22}T_{12}^*g_1,g_2\rangle_{\mathscr{H}_2}
		\end{aligned}
	\end{equation}
for any $g_1\in \mathscr{H}_1$,$g_2\in \mathscr{H}_2$, where 
 $\mathscr{K}_{11}, \mathscr{K}_{22}$ are the covariance operators of $h_1,h_2$, respectively. We assume that $\mathscr{K}_{11},\mathscr{K}_{22}$
 are not zero.
If $\mathscr{K}_{12}'=0$, then we require
\begin{equation}\label{eq31}
			\mathscr{K}_{11}T_{21}^*=
			T_{12}\{T_{21}\mathscr{K}_{11}T_{21}^*+\mathscr{K}_{22}\}\ .
\end{equation}
We have 
\begin{equation*}
\begin{aligned}
    (\ref{eq21})=0&\Leftrightarrow
T_{21}\mathscr{K}_{11}(I-T_{21}^*T_{12}^*)=\mathscr{K}_{22}T_{12}^*
\\
&\Leftrightarrow T_{21}\mathscr{K}_{11}
=(T_{21}\mathscr{K}_{11}T_{21}^*+\mathscr{K}_{22})T_{12}^*
\Leftrightarrow (\ref{eq31})
\end{aligned}
\end{equation*}
However, the covariance operator $K_{11}$ and $K_{22}$ are not invertible because of they are compact operator:
\begin{itemize}
    \item A covariance operator is trace-class operator (Theorem 7.2.5 in \cite{fdabook});
    \item A trace-class operator is Hibert-Schmidt operator (Theorem 4.5.2 in \cite{fdabook});
    \item An Hilbert–Schmidt operator is compact (Theorem 4.4.3 in \cite{fdabook});
    \item A compact operator is not invertible (Theorem 4.1.4 in \cite{fdabook}).
\end{itemize}
Then we know covariance operators are not invertible. But here, we need to notice that we can always define a Moore-Penrose inverse to make the equation (\ref{eq31}) hold if
\begin{equation}\label{c1}
\operatorname{Im}\left(\mathscr{K}_{11}T_{21}^*\right) \subseteq \operatorname{Im}\left(\{T_{21}\mathscr{K}_{11}T_{21}^*+\mathscr{K}_{22}\}\right)
\end{equation}
 and the following is bounded \citep{Libing}:
 \begin{equation}\label{c2}
 \{T_{21}\mathscr{K}_{11}T_{21}^*+\mathscr{K}_{22}\}^{\dagger} \mathscr{K}_{11}T_{21}^*\ .
 \end{equation} 
 Then the problem becomes determining the Images and boundness.

Next we prove $\operatorname{Im}(A) \subseteq \operatorname{Im}(A+B)$. Note that if $A$  is positive semidefinite and $ \langle A u, u\rangle=0$, then $A u=0$. To see why, let
 $v_{1}, \ldots, v_{n}$ be an orthonormal basis of eigenvectors of $A$ (so $A$ $v_{i}=\lambda_{i} v_{i}$) and write  $u=\sum_{i=1}^{n}\left\langle u, v_{i}\right\rangle v_{i} $. Then
$$
\langle A u, u\rangle=\sum_{i=1}^{n}\left\langle u, v_{i}\right\rangle^{2} \lambda_{i}=0
$$
together with $\lambda_{i} \geq 0$  implies that $\left\langle u, v_{i}\right\rangle=0 $ if $\lambda_{i}>0$  so  $u \in \operatorname{ker}(A) $.
To prove that $\operatorname{Im}(A) \subseteq \operatorname{Im}(A+B)$, it is enough to prove that
\begin{equation}
    \begin{array}{c}
\operatorname{ker}(A+B)=\operatorname{Im}(A+B)^{\perp} \subseteq \operatorname{Im}(A)^{\perp}
=\operatorname{ker}(A)
\end{array}
\end{equation}
let $ u \in \operatorname{ker}(A+B) $. Then
$
0=\langle(A+B) u, u\rangle=\langle A u, u\rangle+\langle B u, u\rangle
$
which implies that $\langle A u, u\rangle=0$, so $ u \in \operatorname{ker}(A)$. Then (\ref{c1}) satisfys. Now we consider the boundness. As we know, the eigenvalue of $A+B$ (positive semidefinite) is bigger than $A$ or $B$, which means the inverse eigenvalue of $A+B$ will be smaller than the inverse eigenvalue of $A$ or $B$. Moreover, the smallest eigenvalue of the covariance operator tends to $0$, then $(A+B)^\dagger A$ is bounded. Then we say the equation (\ref{eq31}) holds. We can check more details in the Appendix.



Conversely, we first let $h_1\ci h_2$ and $h_1'\ci h_2'$ in (\ref{eq19}) hold true simultaneously for some $T_{12}$, $T_{21}$,
and we want to prove that $h_1,h_2,h_1',h_2'$
belong to Gaussian under (\ref{eq31}). 
Note that a Hilbert space is a special case of Banach space. Then we use the Proposition \ref{p2}.
We assume that $T_{12}$ is invertible without losing generality.
Next we show that the eigenvalue of $T_{12}T_{21}$  is less than 1, which means that 
$I-T_{12}T_{21}$
is invertible (see Theorem 3.5.5 in  \cite{fdabook}).
To achieve this, we multiply (\ref{eq31}) by $T_{21}$ from the left-hand side, then we obtain
$$T_{21}\mathscr{K}_{11}T_{21}^*=
			T_{21}T_{12}\{T_{21}\mathscr{K}_{11}T_{21}^*+\mathscr{K}_{22}\}\ ,$$
which means that the eigenvalue of $T_{21}T_{12}$ is less than 1. 
Noting that $T_{21}T_{12}$ and $T_{12}T_{21}$ share the eigenvalues:
$$T_{21}T_{12}u=\lambda u 
\Longrightarrow T_{12}T_{21}T_{12}u=\lambda T_{12}u
\Longrightarrow T_{12}T_{21}v=\lambda v
$$
for $\lambda\not=0$, $u\in \mathscr{H}_2$, and $v:=T_{12}u\in \mathscr{H}_1$, we have proved 
that the eigenvalue of $T_{12}T_{21}$ is less than 1. Then, as we did in (\ref{eq41}), we correspond 
\begin{eqnarray*}
&&\left(h_1.h_2,T_{21},I,
I-T_{21}T_{12},-T_{12}\right)
=\left(\xi_1,\xi_2,A_1,A_2,B_1,B_2)
\right)\ ,
\end{eqnarray*}
where $A_1,A_2,B_1,B_2$ are invertible.

\end{proof}

{
\subsection{Causal Inference in Multivariate Scenarios}\label{AB}
In the context of multivariate cases, we introduce two lemmas following \cite{Direct}:\\
1. Lemma~\ref{lm1} identifies the exogenous function.\\
2. Lemma~\ref{lm2} establishes the causal order among residuals.\\
By analyzing residuals, we can determine the causal order of random functions. This is achieved after identifying an exogenous function, which, under the assumption of no latent confounders, corresponds to an independent external influence. The independence of these residuals is assessed through a series of pairwise regressions.
\begin{lemma}\label{lm1}
    For multivariate case, a random function $f_j$ is exogenous if and only if $f_j$ is independent of its residuals $h_i^{(j)}=f_i-{T}_{ij}f_j$ for all $i\neq j$.
\end{lemma}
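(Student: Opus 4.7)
My plan is to adapt the proof structure of Lemma~1 in Direct-LiNGAM \citep{Direct} to the functional setting by invoking the machinery already developed in the excerpt: cross-covariance operators combined with the Moore-Penrose pseudo-inverse (as in the proof of Theorem~\ref{t6}), together with the Darmois-Skitovich theorem in Banach spaces (Proposition~\ref{p2}).

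For the necessity direction (exogeneity implies independence of residuals), I would first use the acyclicity of the multivariate structural model to express each $f_i$, $i\neq j$, as a linear combination $f_i = C_{ij} h_j + r_i$, where $C_{ij}$ is a bounded linear operator and $r_i$ lies in the closed linear span of the remaining independent noise functions $\{h_k : k\neq j\}$. Under exogeneity, $f_j = h_j$, so the covariance operator of $f_j$ is $\mathscr{K}_{jj}$ and the cross-covariance operator of $f_i$ with $f_j$ is $C_{ij}\mathscr{K}_{jj}$. Defining $T_{ij}$ via the Moore-Penrose inverse as in the proof of Theorem~\ref{t6} gives $T_{ij} = C_{ij}$ on the image of $\mathscr{K}_{jj}$, so the residual collapses to $h_i^{(j)} = r_i$. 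Since $h_j$ is independent of every $h_k$ with $k\neq j$, it is independent of $r_i$, yielding $f_j \ci h_i^{(j)}$ for all $i\neq j$.

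For the sufficiency direction, I would argue by contrapositive: suppose $f_j$ is not exogenous, so $f_j = \sum_{k \in \mathrm{pa}(j)} T_{jk} f_k + h_j$ with a non-empty parent set. Expanding recursively along the DAG, $f_j$ can be written as a linear combination of the mutually independent noises in which at least one $h_k$, $k\neq j$, appears with a bounded operator coefficient that is non-trivial on an appropriate subspace. Simultaneously, for a suitably chosen $i$, the residual $h_i^{(j)}$ also picks up a non-zero contribution from the same $h_k$ after the regression subtraction. Applying Proposition~\ref{p2} to the two linear forms $f_j$ and $h_i^{(j)}$ under the hypothesis $f_j \ci h_i^{(j)}$ then forces $h_k$ to be a Gaussian process, contradicting the standing non-Gaussianity assumption on the noise functions and completing the argument.

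The main obstacles I anticipate are twofold. First, in the necessity direction, one must verify the image-inclusion and boundedness conditions (\ref{c1})--(\ref{c2}) so that the Moore-Penrose inverse appearing in $T_{ij}$ genuinely recovers the coefficient operator $C_{ij}$ on the relevant subspace; fortunately the positive-semidefinite kernel argument used for (\ref{c1})--(\ref{c2}) in the proof of Theorem~\ref{t6} transfers with essentially the same calculation. Second, in the sufficiency direction, invoking Proposition~\ref{p2} requires the coefficient operators multiplying the shared noise $h_k$ in the two linear forms to be invertible; identifying an index $i$ for which both coefficients have this property requires a careful traversal of the causal DAG (distinguishing ancestors, descendants, and siblings of $j$) and is the delicate structural step of the proof.
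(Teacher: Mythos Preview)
Your proposal follows essentially the same strategy as the paper: the necessity direction shows that when $f_j=h_j$ the residual $h_i^{(j)}$ lies in the span of $\{h_k:k\neq j\}$ and is therefore independent of $f_j$, while the sufficiency direction argues by contrapositive, expands $f_j$ and $h_i^{(j)}$ as linear forms in the independent noises, and invokes the Banach-space Darmois--Skitovich theorem (Proposition~\ref{p2}) on a shared noise component to force a Gaussian contradiction. The paper's proof is in fact terser than yours---it neither passes through the Moore--Penrose construction in the necessity step nor isolates the invertibility hypothesis of Proposition~\ref{p2} in the sufficiency step---so the two obstacles you flag are genuine but are simply left implicit in the published argument rather than handled by a different route.
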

\begin{proof}
    For the sufficiency, if $f_j\ci h_i^{(j)}$, assume $f_j$ is not exogenous, then $f_j=\sum_{k\in P_j}{T}_{jk}f_k+h_j=\sum_{k\in P_j}{T}_{jk}\sum_{l\neq j}T_{kl}h_l+h_j$, where $P_j$ means parents of $f_j$. Then $h_i^{(j)}=(I-{T}_{ij}{T}_{ji})f_i-{T}_{ij}\sum_{k\in P_j,k\neq i}{T}_{jk}f_k-{T}_{ij}h_j=(I-{T}_{ij}{T}_{ji})\sum_{q\neq j}T_{iq}h_q-{T}_{ij}\sum_{k\in P_j,k\neq i}{T}_{jk}\sum_{l\neq j}T_{kl}h_l-{T}_{ij}h_j$. The two formulas are composed of linear combinations of
external influences other than $h_j$, from Prop.~\ref{p2}, all the functions are non-Gaussian, then $h_i^{(j)}\nni f_j$, then it contradicts. Therefore, $f_j$ should be exogenous; For the necessity, if $f_j$ is exogenous, $f_j=h_j$, $f_i=T_{ij}f_j+h_i$ with $h_i\ci f_j,h_i=\sum_{k\neq j}T_{ik}h_k$, we know the residual error $h_i^{(j)}=h_i$. Then, we know $f_j\ci h_i^{(j)}$ from the independence of noise functions. So far, the lemma has been proven.
\end{proof}
\begin{lemma}\label{lm2}
    Let $k_{r^{(j)}}(i)$ is the causal order of $r_i^{(j)}$, $k(i)$ denotes a causal
order of $f_i$. Then, the same ordering of the residuals $r_i=h_i^{(1)}=f_i-T_{i1}f_1,i=1\dots,p-1$ is a causal ordering for the original observed
functions as well: $k_{r^{(j)}}(l)<k_{r^{(j)}}(m)\Longleftrightarrow k(l)<k(m)$.
\end{lemma}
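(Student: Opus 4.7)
The plan is to adapt the classical DirectLiNGAM argument of \cite{Direct} to the operator-valued setting of Section~\ref{s3}. By Lemma~\ref{lm1} and a relabeling, I may assume without loss of generality that $f_1$ is exogenous (so we take $j=1$), so that any causal ordering $k$ of $(f_1,\ldots,f_p)$ places $f_1$ first and the restriction of $k$ to $\{2,\ldots,p\}$ is a topological ordering of the sub-DAG obtained by deleting node~$1$. The goal is then to show that these are precisely the topological orderings of the residual system $r_i = f_i - T_{i1}f_1$ for $i\in\{2,\ldots,p\}$.

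First I would identify $T_{i1}$ with the \emph{total-effect operator} from $f_1$ to $f_i$. Because $f_1$ is exogenous, $T_{i1}$ is the unique bounded linear operator making $r_i \ci f_1$ (the same logic as in the proof of Lemma~\ref{lm1}), and by unrolling the structural equations it must equal the sum over all directed paths from node~$1$ to node~$i$ of the composition of the direct-effect operators along the path. Writing the structural direct-effect operators as $B_{ik}$ for $k\in P_i$, this yields the recursion $T_{i1} = \sum_{k\in P_i} B_{ik}\, T_{k1}$ with $T_{11}:=I$, which I would establish by induction on the topological depth of~$i$.

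Next, substituting $f_k = r_k + T_{k1}f_1$ into each structural equation for $i\geq 2$ and using the recursion to cancel the $f_1$-contributions, I obtain
\begin{equation*}
r_i \;=\; \sum_{k\in P_i,\, k\geq 2} B_{ik}\, r_k + h_i, \qquad i\geq 2.
\end{equation*}
Hence $\{r_2,\ldots,r_p\}$ forms a functional LiNGAM on the sub-DAG of the original DAG obtained by deleting node~$1$, with the same direct-effect operators $B_{ik}$, the same independent non-Gaussian noises $h_2,\ldots,h_p$, and the same parent sets restricted to $\{2,\ldots,p\}$. Consequently, any topological ordering of the residual DAG is a topological ordering of the original DAG's restriction to $\{2,\ldots,p\}$, and prepending node~$1$ produces a topological ordering of the full DAG. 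The equivalence $k_{r^{(1)}}(l)<k_{r^{(1)}}(m)\Longleftrightarrow k(l)<k(m)$ follows.

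The main obstacle I anticipate is establishing rigorously that $T_{i1}$ coincides with the total-effect operator and that the cancellation in the substitution step is exact. In the scalar case this is a one-line computation; in the Hilbert-space setting one must work with operator composition rather than multiplication and invoke that $r_i \ci f_1$ pins $T_{i1}$ down uniquely, which is a consequence of the pairwise identifiability established in Theorem~\ref{t6}. Once that identification is in place, the remaining algebra is a mechanical rewrite of the scalar DirectLiNGAM proof with compositions replacing products, so the result transfers to the functional setting without further analytical difficulty.
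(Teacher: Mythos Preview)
Your proposal is correct and follows essentially the same route as the paper: both arguments show that after regressing out the exogenous function $f_1$, the residual system $\{r_i\}_{i\geq 2}$ is again a functional LiNGAM on the sub-DAG obtained by deleting node~$1$, with the same noises $h_i$ and the same parent structure, so the causal orders coincide. Your version is more explicit than the paper's in distinguishing the total-effect operators $T_{i1}$ from the structural direct-effect operators $B_{ik}$ and in carrying out the cancellation via the recursion $T_{i1}=\sum_{k\in P_i}B_{ik}T_{k1}$, but the underlying idea is the DirectLiNGAM residual argument in both cases.
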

\begin{proof}
    When we determine the exogenous function $f_1$, we need to estimate the $p-1$ residuals of $f_1$: $r_i=h_i^{(1)}=f_i-T_{i1}f_1=\sum_{j\neq 1}T_{ij}f_j+T_{i1}f_1-T_{i1}f_1=\sum_{j\neq 1}T_{ij}f_j,i=1,\dots,p-1$, which is $r_i=\sum_{j\neq 1}T_{ij}\sum_{k\neq j}T_{jk}h_k$. For the residual of $r_2=f_2-T_{21}f_1=h_2$ (second function) is $r_i^{(j)}=r_i-T'_{ij}r_j=\sum_{j\neq 1}T_{ij}\sum_{k\neq j,2}T_{jk}h_k+T_{i2}'h_2-T_{i2}'h_2=\sum_{j\neq 1}T_{ij}\sum_{k\neq j,2}T_{jk}h_k$ for all $i\neq j.$ From the independence assumption of noise functions, we know $r_2\ci r_i^{(2)}$. Then we know the causal relationships of residuals $r_i,i=1,\dots,p-1$ are the same as $f_i,i=1,\dots,p-1$ with the $T'_{ij}$ because what we need to do is to test the independence between $r_i$ and its residual $r_i^{(j)}$. 
\end{proof}}
Extending the notion, we can determine the order among any number of random functions such as 
$f_i=\sum_{j=1}^{i-1}T_{i,j}f_j+h_i$
with non-Gaussian $h_i$ and bounded linear operators $T_{i,j}; H_{j}\rightarrow H_i$ for $p$ random functions $f_1\in H_1,\ldots,f_p\in H_p$.

\section{The Procedure}\label{s4}
Consider one model from (\ref{eq19}):
\begin{equation}\label{eq12}
f_2=T_{21}f_1+h_2\ .
\end{equation}
Then let's notice the statement as follows: 
\begin{proposition}[\cite{fdabook}]\label{prop6}
Let $T: \mathscr{H}_1\rightarrow \mathscr{H}_2$ be a compact\footnote{We define a bounded linear operator $T: \mathscr{H}_1\rightarrow \mathscr{H}_2$ to be compact if, for any bounded infinite sequence $\{f_n\}$ in $\mathscr{H}_1$, the sequence $\{Tf_n\}$ has a convergent subsequence in $\mathscr{H}_2$.} bounded linear operator, $\{\lambda_j\}$ be the eigenvalues, and $\{e_{1,j}\}$ and $\{e_{2,j}\}$ be the sequences with orthonormal eigenvectors of 
$T^*T$ and $TT^*$, respectively.
Then
$$Tf=\sum_{i=1}^\infty \lambda_i\langle f,e_{1,i}\rangle_{\mathscr{H}_1}e_{2,i}$$
with $f\in \mathscr{H}_1$.
\end{proposition}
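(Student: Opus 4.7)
The plan is to establish a Schmidt (singular value) decomposition for the compact operator $T$ by first diagonalizing $T^*T$ via the spectral theorem for compact self-adjoint operators, then transferring that decomposition to $T$ itself. The form of the conclusion $Tf=\sum_i\lambda_i\langle f,e_{1,i}\rangle_{\mathscr{H}_1}e_{2,i}$ essentially reads as the Schmidt expansion with singular values $\lambda_i$, so this is the natural route.

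First I would observe that $T^*T:\mathscr{H}_1\to\mathscr{H}_1$ is compact (composition of the compact $T$ with the bounded $T^*$), self-adjoint, and positive semi-definite, since $\langle T^*Tf,f\rangle_{\mathscr{H}_1}=\|Tf\|_{\mathscr{H}_2}^2\geq 0$. The spectral theorem for compact self-adjoint operators then supplies an orthonormal system $\{e_{1,i}\}$ of eigenvectors spanning $\overline{\operatorname{Im}(T^*T)}$ with non-negative eigenvalues $\mu_i$ accumulating only at $0$; set $\lambda_i:=\sqrt{\mu_i}$ as the claimed singular values. For each $i$ with $\lambda_i>0$, define $e_{2,i}:=\lambda_i^{-1}Te_{1,i}\in\mathscr{H}_2$. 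A direct computation gives
\begin{equation*}
\langle e_{2,i},e_{2,j}\rangle_{\mathscr{H}_2}=\lambda_i^{-1}\lambda_j^{-1}\langle T^*Te_{1,i},e_{1,j}\rangle_{\mathscr{H}_1}=\lambda_i^{-1}\lambda_j^{-1}\mu_i\delta_{ij}=\delta_{ij},
\end{equation*}
so $\{e_{2,i}\}$ is orthonormal, and $TT^*e_{2,i}=\lambda_i^{-1}T(T^*Te_{1,i})=\lambda_i^{-1}T(\mu_ie_{1,i})=\mu_ie_{2,i}$, identifying the $e_{2,i}$ as eigenvectors of $TT^*$ with the same nonzero eigenvalues, as required by the statement.

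Finally, I would expand an arbitrary $f\in\mathscr{H}_1$ as $f=\sum_i\langle f,e_{1,i}\rangle_{\mathscr{H}_1}e_{1,i}+f_\perp$ with $f_\perp\in\ker(T^*T)$. Because $\|Tf_\perp\|_{\mathscr{H}_2}^2=\langle T^*Tf_\perp,f_\perp\rangle_{\mathscr{H}_1}=0$, we have $\ker(T^*T)=\ker(T)$, hence $Tf_\perp=0$, and applying the bounded operator $T$ termwise delivers $Tf=\sum_i\lambda_i\langle f,e_{1,i}\rangle_{\mathscr{H}_1}e_{2,i}$. The main obstacle I anticipate is carefully invoking and justifying the spectral theorem in the compact self-adjoint setting and verifying norm convergence of the partial sums in $\mathscr{H}_2$, which follows from $\lambda_i\to 0$ together with Bessel's inequality applied to $\{\langle f,e_{1,i}\rangle_{\mathscr{H}_1}\}$. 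A secondary bookkeeping point is segregating indices with $\lambda_i=0$ (whose contribution to the sum vanishes and whose $e_{2,i}$ can be chosen by orthonormal completion of $TT^*$'s zero-eigenspace) from the nonzero singular values that carry the action of $T$.
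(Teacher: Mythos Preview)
Your argument is the standard Schmidt (singular value) decomposition proof and is correct. The paper itself does not prove this proposition: it is quoted as a known result from \cite{fdabook} and used without proof, so there is no paper proof to compare against beyond noting that your route is precisely the textbook derivation one would find in that reference.
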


Following the notation in Proposition \ref{prop6}, we write the three terms $T_{21}f_1=\sum_{i=1}^\infty\lambda_i f_{1,i}e_{1,i}$, $f_2=\sum_{i=1}^\infty f_{2,i}e_{2,i}$, $h_2=\sum_{i=1}^\infty h_{2,i}e_{2,i}$. Then, 
(\ref{eq12}) becomes:
\begin{theorem}\label{t7}
Suppose that $T_{21}: \mathscr{H}_1\rightarrow \mathscr{H}_2$ is compact.
If we regard the bases of $\mathscr{H}_1$ and $\mathscr{H}_2$ as $\{e_{1,i}\}$ and $\{e_{2,i}\}$, respectively,
then
\begin{equation}\label{eq13}
f_{2,i}=\lambda_i f_{1,i}+h_{2,i}
\end{equation}
for $i=1,2,\ldots$, where $\lambda_1\geq \lambda_2\geq \cdots$.
\end{theorem}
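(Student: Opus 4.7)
The plan is to read the statement off the singular value decomposition of $T_{21}$ that Proposition \ref{prop6} provides, by expanding both sides of (\ref{eq12}) in the chosen bases and equating coefficients.

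First I would take the expansion $f_1 = \sum_{i=1}^\infty f_{1,i} e_{1,i}$ with $f_{1,i} = \langle f_1, e_{1,i}\rangle_{\mathscr{H}_1}$, and the expansions $f_2 = \sum_i f_{2,i} e_{2,i}$ and $h_2 = \sum_i h_{2,i} e_{2,i}$ with coefficients defined analogously. These are well defined exactly because the theorem instructs us to regard $\{e_{1,i}\}$ and $\{e_{2,i}\}$ as orthonormal bases of the respective spaces; I would note this in a one-line remark so that the reader sees where the completeness assumption is being used.

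Next I would apply the compact-operator representation of Proposition \ref{prop6} directly to $f_1$:
\begin{equation*}
T_{21} f_1 \;=\; \sum_{i=1}^\infty \lambda_i \langle f_1, e_{1,i}\rangle_{\mathscr{H}_1}\, e_{2,i} \;=\; \sum_{i=1}^\infty \lambda_i f_{1,i}\, e_{2,i}.
\end{equation*}
Substituting this expansion along with that of $h_2$ into (\ref{eq12}) and comparing with the expansion of $f_2$ yields
\begin{equation*}
\sum_{i=1}^\infty f_{2,i}\, e_{2,i} \;=\; \sum_{i=1}^\infty \bigl(\lambda_i f_{1,i} + h_{2,i}\bigr)\, e_{2,i}.
\end{equation*}
Taking the inner product of both sides with $e_{2,j}$ and invoking orthonormality gives the claimed identity $f_{2,j} = \lambda_j f_{1,j} + h_{2,j}$ for each $j$. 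The ordering $\lambda_1 \geq \lambda_2 \geq \cdots$ is simply the standard convention for the singular values in Proposition \ref{prop6} and requires no further argument.

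There is not really a hard step here: once the SVD of the compact operator is in hand, the proof is a bookkeeping exercise in orthonormal expansions. The only point that deserves care is the justification that $f_1$, $f_2$, and $h_2$ all lie in the closed spans of $\{e_{1,i}\}$ and $\{e_{2,i}\}$ — that is, that these eigensystems are genuinely bases and not merely bases of $(\ker T_{21})^\perp$ and $\overline{\operatorname{Im}\, T_{21}}$. Since the hypothesis of the theorem states that we regard them as the bases of $\mathscr{H}_1$ and $\mathscr{H}_2$, this is built into the setting and the proof proceeds without obstruction.
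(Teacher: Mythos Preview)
Your proposal is correct and matches the paper's approach exactly: the paper's entire argument is the sentence preceding the theorem, where it writes the three expansions $T_{21}f_1=\sum_i \lambda_i f_{1,i}e_{2,i}$, $f_2=\sum_i f_{2,i}e_{2,i}$, $h_2=\sum_i h_{2,i}e_{2,i}$ and declares that (\ref{eq12}) then becomes (\ref{eq13}). Your write-up is simply a more careful rendering of that same substitution-and-compare-coefficients step, with the added (and appropriate) remark about the basis-completeness hypothesis.
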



To ensure convergence of the eigenvalue sequence $\{\lambda_i\}$, we suppose that the operator $T_{21}$ is compact. Without compactness, the $\{\lambda_i\}$ would not be convergent. Practically, we approximate the infinite-dimensional random functions $f_1\in \mathscr{H}_1$, $f_2,h_2\in \mathscr{H}_2$ by finite length $M$ random vectors. We select the bases $\{e_{1,i}\}_{i=1}^M$ and $\{e_{2,i}\}_{i=1}^M$ to minimize the approximation error.

FPCA offers a more effective fit than PCA for raw data dimensionality reduction, particularly with time-series data like fMRI and EEG, where dimensions vary with sampling frequency (e.g., 100Hz vs. 1Hz). As frequency increases, dimensions approach infinity. FPCA overcomes this by approximating infinite dimensions through orthogonal bases, preserving maximal original data information and capturing latent details beyond traditional sampling. Figures~\ref{E1} and \ref{E2} demonstrate FPCA's necessity for functional data.

The other merit of using the FPCA (functional principal component analysis) approach is 
its efficiency. We assume the following procedure:
first, we approximate the $W$ time points sampled from functions by the $L$ coefficients of the basis functions (B-spline). Then, we transform it by the $M$ coefficients of the basis functions
defined above. The time complexity is as follows. $M< L\ll W$ and 
the time complexity $C(M)$ of the proposed procedure is much less than $C(W)$.
For example, \cite{Direct} evaluated the complexity of their method as $C(W)=O(n(Wp)^3q^2+Wp)^4q^3)$, where $q$ ($\ll n$) is the maximal rank found by the low-rank decomposition used in the kernel-based independence measure, 
although the proposed procedure requires additional $O(nL^2+L^3)$ complexity for  the covariance matrix $O(nL^2)$ and eigenvalue decomposition $O(L^3)$. 

This paper primarily examines the summary causal relationships among random functions, focusing less on specific time points or partial windows in temporal data. There are three graphical representations of causal structures in temporal data, namely, the \textit{full-time causal graph}, the \textit{window causal graph}, and the \textit{summary causal graph} \citep{survey}. The \textit{full-time causal graph}, illustrated on the left in Fig.~\ref{E4}, depicts a complete dynamic system, representing all vertices including components \(f_1, \dots, f_p\) at each time point \(t\), connected through lag-specific directed links such as \(f_{i}^{t-k}\rightarrow f_{j}^{t}\). However, due to the challenges of capturing a single observation for each series at every time point, constructing a full-time causal graph can be complex.
To address this, the \textit{window causal graph} concept is introduced, which operates under the assumption of a time-homogeneous causal structure. This graph, shown in the middle of Fig.~\ref{E4}, works within a time window corresponding to the maximum lag in the full-time graph. On the other hand, the \textit{summary causal graph}, displayed on the right in Fig.~\ref{E4}, abstracts each time series component into a single node, illustrating inter-series causal relationships without specifying particular time lags. The complexity of this summary graph depends on the choice of multivariate dependence measure, such as mutual information or HSIC. The algorithmic complexity for generating this graph is similar to that of DirectLiNGAM. Fig.~\ref{E4} visually compares these different types of causal graphs for multivariate time series.


\begin{figure*}[t]
    \centering
\begin{tikzpicture}\label{MTS}

\node[draw, circle, inner sep=1pt, font=\scriptsize] (a) at (0,2.2) {$f_1^{t-2}$};
\node[circle, draw=black, inner sep=1pt, font=\scriptsize] (b) at (1.5,2.2) {$f_1^{t-1}$};
\node[circle, draw=black, inner sep=3.7pt, font=\scriptsize] (c) at (3.0,2.2) {$f_1^{t}$};
\node[circle, draw=black, inner sep=1pt, font=\scriptsize] (d) at (4.5,2.2) {$f_1^{t+1}$};
\node[draw, circle, inner sep=1pt, font=\scriptsize] (a1) at (0,1.1) {$f_2^{t-2}$};
\node[circle, draw=black, inner sep=1pt, font=\scriptsize] (b1) at (1.5,1.1) {$f_2^{t-1}$};
\node[circle, draw=black, inner sep=3.7pt, font=\scriptsize] (c1) at (3.0,1.1) {$f_2^{t}$};
\node[circle, draw=black, inner sep=1pt, font=\scriptsize] (d1) at (4.5,1.1) {$f_2^{t+1}$};
\node[draw, circle, inner sep=1pt, font=\scriptsize] (a2) at (0,0) {$f_3^{t-2}$};
\node[circle, draw=black, inner sep=1pt, font=\scriptsize, label=below:Full-time] (b2) at (1.5,0) {$f_3^{t-1}$};
\node[circle, draw=black, inner sep=3.7pt, font=\scriptsize] (c2) at (3.0,0) {$f_3^{t}$};
\node[circle, draw=black, inner sep=1pt, font=\scriptsize] (d2) at (4.5,0) {$f_3^{t+1}$};
\draw[->,thick] (a) -- (b);
\draw[->,thick] (b) -- (c);
\draw[->,thick] (c) -- (d);
\draw[->,thick] (a1) -- (b1);
\draw[->,thick] (b1) -- (c1);
\draw[->,thick] (c1) -- (d1);
\draw[->,thick] (a2) -- (b2);
\draw[->,thick] (b2) -- (c2);
\draw[->,thick] (c2) -- (d2);
\draw[->,thick] (a1) -- (b2);
\draw[->,thick] (b1) -- (c2);
\draw[->,thick] (c1) -- (d2);
\draw[->,thick] (a1) -- (a);
\draw[->,thick] (b1) -- (b);
\draw[->,thick] (c1) -- (c);
\draw[->,thick] (d1) -- (d);

\node[draw, circle, inner sep=1pt, font=\scriptsize] (a3) at (6.5,2.2) {$f_1^{t-1}$};
\node[circle, draw=black, inner sep=3pt, font=\scriptsize] (b3) at (8.0,2.2) {$f_1^{t}$};
\node[draw, circle, inner sep=1pt, font=\scriptsize] (a4) at (6.5,1.1) {$f_2^{t-1}$};
\node[circle, draw=black, inner sep=3pt, font=\scriptsize] (b4) at (8,1.1) {$f_2^{t}$};
\node[draw, circle, inner sep=1pt, font=\scriptsize, label=below: Window] (a5) at (6.5,0) {$f_3^{t-1}$};
\node[circle, draw=black, inner sep=3pt, font=\scriptsize] (b5) at (8,0) {$f_3^{t}$};
\draw[->,thick] (a3) -- (b3);
\draw[->,thick] (a4) -- (b4);
\draw[->,thick] (a5) -- (b5);
\draw[->,thick] (a4) -- (b5);
\draw[->,thick] (a4) -- (a3);
\draw[->,thick] (b4) -- (b3);

\node[draw, circle, inner sep=4pt, font=\scriptsize] (a3) at (10,2.2) {$f_1$};
\node[draw, circle, inner sep=4pt, font=\scriptsize] (a4) at (10,1.1) {$f_2$};
\node[draw, circle, inner sep=4pt, font=\scriptsize, label=below: Summary] (a5) at (10,0) {$f_3$};

\draw[->,thick] (a4) -- (a3);
\draw[->,thick] (a4) -- (a5);


\end{tikzpicture}
    \caption{\textbf{Illustration of Different Kinds of Multivariate Time Series Causal Graphs}. Left: Full-time; Middle: Window; Right: Summary (this paper).}
    \label{E4}
\end{figure*}
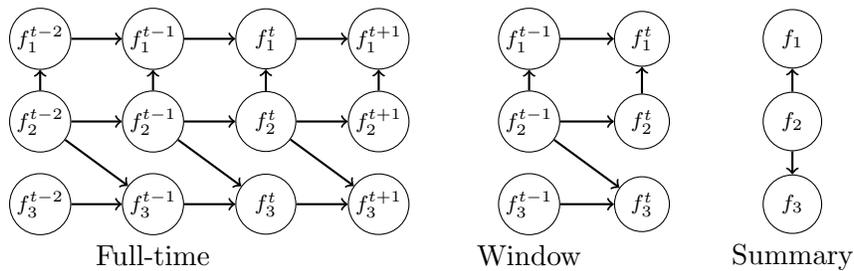

\subsection{Algorithm}
To show how to implement this method, we provide algorithm pseudocode and empirical experiments to demonstrate the efficiency. The algorithm presented in this study shares similarities with the greedy search method of DirectLiNGAM. However, it diverges in two key aspects: first, we leverage Functional Principal Component Analysis (FPCA) for data preprocessing, and second, our independence test considers multivariate relationships rather than univariate ones. This makes Func-LiNGAM straightforward to implement. For the purpose of this paper, we focus on providing a basic implementation without delving into enhancing search methods or other optimizations, as they are not the primary focus of our research. The whole algorithm is as Algorithm~\ref{algo1}.

Note that the $W$ means the sampled time points from one random function. As the intrinsically infinite-dimensional property of functional data, we need to approximate $W$ with efficient finite representation (FPCA with principal component number $M$ ($M\ll W$)). The number $M$ can be decided by the explained variance ratio ($95\%$ or $99\%$). To be simple, here we let all the $M$ of random functions be the same.
\begin{algorithm}
\caption{Func-LiNGAM (Can be regarded as vector-based DirectLiNGAM but with FPCA preprocessing.)}
\label{algo1}
\begin{algorithmic}[1]
\State \textbf{Input:} Each function has $W$ time points, then construct $Wp$-dimensional random vector ${f}$ ($W$: Full-time points) for $p$ functions, a set of its variable subscripts $U$ and a $Wp \times n$ data
matrix as $F$, initialize an ordered list of functions $K=\emptyset$ and $m:=1$; 
\State \textbf{Output:} Adjacent Matrix $\hat{T}\in\mathbb{R}^{p\times p}$
\State Use FPCA for finite approximating each random vector to make their dimensions from $Wp$ to $Mp$, where $M$ is the number of principal components.
\Repeat
\State 
(a) Perform least squares regressions of the approximating random vector $\hat{f}_{i}\in\mathbb{R}^M$ on  $\hat{f}_{j}\in\mathbb{R}^M$ for all $ i \in U \backslash K(i \neq j) $ and compute the residual vectors $ \mathbf{r}^{(j)} $ and the residual data matrix $ \mathbf{R}^{(j)} $ from the data matrix $ {F} $ for all $ j \in U \backslash K $. Find a variable $ \hat{f}_{m} $ that is most independent of its residuals:
$$
\hat{f}_{m}=\arg \min _{j \in U \backslash K} MI\left(\hat{f}_{j} ; U \backslash K\right),
$$
where $ MI $ is the independence measure such as mutual information or other measures.
\State (b) Append $ m $ to the end of $ K $.
\State (c) Let  $\hat{\mathbf{f}}:=\mathbf{r}^{(m)}, \hat{F}:=\mathbf{R}^{(m)} $.
\Until{$p-1$ subscripts are appended to $K$}
\State Append the remaining variable to the end of $ K $.
\State Construct a strictly lower triangular matrix $\hat{T} $ by following the order in $ K $, and estimate the connection strengths $ \hat{T}_{i j} $ by using least squares regression in this paper.
\end{algorithmic}
\end{algorithm}

\section{Experiment}\label{s5}
To validate our method, we conducted comprehensive experiments using simulated data, as shown in Table~\ref{tab:my_label}. We observed an improvement in performance as the sample size increased across multiple functions. Notably, precision decreased monotonically and Structural Hamming Distance (SHD) increased monotonically as the number of functions ($p$) grew. Our data generation process, following the settings in \cite{FGM}, involved $ n \times p $ random functions, defined as:
\begin{equation}
    \begin{aligned}
         X_{ij}(t)&=\mathbf{\phi}(t)^{T} \delta_{ij}
    \end{aligned}
\end{equation}
where $i$ represents the $i_{th}$ sample ($i=1,\dots, n$), and $j$ denotes the $j_{th}$ random vector. The vector $\delta_{ij}\in\mathbb{R}^{5}$ can be an arbitrary non-Gaussian random vector. Here we generated these by first creating random vectors $q_{ij}\sim\mathcal{N}(0,I_5)$, then we square each element of the vector to get $\delta_{ij}$. The five-dimensional Fourier basis $ \mathbf{\phi}(t) $ was also used.
{We modeled the causal relationships in $\delta_i$ as follows:
\begin{equation}
    \begin{aligned}
        \delta_{i0}=\epsilon_{0},\quad\delta_{i1}=B_{1,0}\delta_{i0}+\epsilon_{1},\quad\dots,\quad
        \delta_{ip}=B_{p,p-1}\delta_{i(p-1)}+\epsilon_{p}
    \end{aligned}
\end{equation}
where $u_l\sim\mathcal{N}(0,I_5)$, then we square each element of the vector to get $\epsilon_l$. To be simple, we set $B_{l,l-1}=I_5,l=1,\dots,p$. 
The sample size is $ n=\{100, 200, 300, 700 \}$, $p=\{5,10,20,30,50,70\}$, and the observed values, $ g_{ij}(t_k) $, follow  
$$
	g_{ij}(t_k)=X_{ij}\left(t_{k}\right)+e_{i j k},
$$ 
where $e_{ijk}$ is derived from the square of the random variable $q_{ijk}$, where $ q_{ijk} \sim \mathcal{N}(0, 0.25) $. Specifically, $ e_{ijk} = q_{ijk}^2 $. Due to the squaring of a normally distributed variable with a variance of $0.25$, the resulting distribution of $ e_{ijk} $ can be described as a Gamma distribution with a shape parameter of \( \frac{1}{2} \) and a scale parameter of \( 0.5 \), applicable for \( i = 1, \dots, n \) and \( j = 1, \dots, p \).} Every random function is sampled at $ W=1000 $ equidistant time points, $ 0=t_{1}, \ldots, t_{1000}=1 $. 
		
We employ B-spline bases as a fitting technique for each random function instead of the Fourier basis to represent the actual data accurately. B-spline bases offer more flexibility and can capture the complex shapes and patterns present in the data. After fitting the random functions with B-spline bases, we calculate each random function's estimated principal component scores. These scores are derived from the basis coefficients, with the number of calculated principal component scores limited to the first $M$ components ($M \leq W$). The choice of $M$ allows us to control the dimensionality of the data representation, providing a balance between capturing the most important variability in the data and minimizing computational complexity. By calculating these estimated principal component scores, we obtain a concise representation of the data that encapsulates its essential characteristics while reducing its dimensionality. This approach allows for efficient analysis and interpretation of the random functions within the context of our methodology. We set $M=5$ ($99\%$ explained variance ratio) for the B-spline. Cross-validation can also obtain the optimal $M$. However, we set the parameters to ensure they maintain as much information as possible. 
We evaluate the Func-LiNGAM with Precision, Recall ratio, F1-score, and SHD (Structural Hamming Distance in \cite{SHD}) in {$50$} trials as Table~\ref{tab:my_label}. The smaller the SHD, the better the performance. To clarify, our objective is to demonstrate an implementation example rather than to propose a superior algorithm through comparison.

\begin{table}
    \centering
    \begin{center}
\scalebox{0.77}{
\begin{tabular}{llllllllllll}
\hline
\multirow{2}{*}{Data size}&\multirow{2}{*}{ Metrics  }&\multicolumn{6}{c}{Various number of functions (mean $\pm$ standard deviation)  }\\
\cline{3-8}
&&\multicolumn{1}{c}{$p=5$}&\multicolumn{1}{c}{$p=10$}&\multicolumn{1}{c}{$p=20$}&\multicolumn{1}{c}{$p=30$}&\multicolumn{1}{c}{$p=50$}&\multicolumn{1}{c}{$p=70$}\\
\cline{1-8}
\multirow{4}{*}{$n=100$}&Precision&{$0.76\pm0.14$}&\multicolumn{1}{c}{$0.64\pm0.10$}&{$0.57\pm0.09$}&{$0.40\pm0.06$}&{$0.30\pm0.04$}&{$0.25\pm0.03$}\\
&Recall&{$0.99\pm0.04$}&{$0.95\pm0.0$}&{$0.90\pm0.07$}&{$0.75\pm0.07$}&{$0.65\pm0.04$}&{$0.59\pm0.04$}\\
&F1&{$0.85\pm0.10$}&{$0.76\pm0.08$}&{$0.70\pm0.09$}&{$0.52\pm0.07$}&{$0.41\pm0.05$}&{$0.35\pm0.03$}\\
&SHD&{$1.40\pm0.95$}&{$5.03\pm1.91$}&{$13.47\pm4.17$}&{$33.47\pm6.56$}&{$74.73\pm9.86$}&{$119.47\pm10.70$}\\
\cline{1-8}
\multirow{4}{*}{$n=200$}&Precision&{$0.83\pm0.14$}&\multicolumn{1}{c}{$0.76\pm0.29$}&{$0.72\pm0.07$}&{$0.70\pm0.06$}&{$0.54\pm0.05$}&{$0.46\pm0.07$}\\
&Recall&{$1.00\pm0.00$}&\multicolumn{1}{c}{$0.80\pm0.24$}&{$0.99\pm0.01$}&{$0.97\pm0.03$}&{$0.88\pm0.03$}&{$0.81\pm0.07$}\\
&F1&{$0.90\pm0.08$}&\multicolumn{1}{c}{$0.78\pm0.27$}&{$0.83\pm0.05$}&{$0.81\pm0.05$}&{$0.67\pm0.05$}&{$0.59\pm0.07$}\\
&SHD&{$0.97\pm0.91$}&\multicolumn{1}{c}{$3.63\pm4.58$}&{$7.70\pm2.35$}&{$12.53\pm3.36$}&{$37.03\pm6.60$}&{$66.20\pm12.79$}\\
\cline{1-8}
\multirow{4}{*}{$n=300$}&Precision&{${0.85\pm0.13}$}&\multicolumn{1}{c}{$0.79\pm0.28$}&{${0.75\pm0.07}$}&{$0.74\pm0.05$}&{$0.70\pm0.05$}&{$0.60\pm0.04$}\\
&Recall&{${1.00\pm0.00}$}&\multicolumn{1}{c}{$0.84\pm0.23$}&{$1.00\pm0.00$}&{$0.99\pm0.01$}&{$0.99\pm0.01$}&{$0.93\pm0.03$}\\
&F1&{${0.92\pm0.08}$}&\multicolumn{1}{c}{$0.81\pm0.26$}&{${0.86\pm0.05}$}&{$0.85\pm0.03$}&{$0.82\pm0.03$}&{$0.73\pm0.04$}\\
&SHD&{${0.80\pm0.75}$}&\multicolumn{1}{c}{$3.17\pm4.43$}&{${6.57\pm2.50}$}&{$10.27\pm2.41$}&{$21.27\pm4.36$}&\multicolumn{1}{l}{$42.90\pm6.25$}\\
\cline{1-8}
\multirow{4}{*}{$n=700$}&Precision&{${0.92\pm0.10}$}&\multicolumn{1}{c}{$0.81\pm0.08$}&{${0.80\pm0.07}$}&{$0.78\pm0.05$}&{$0.74\pm0.03$}&{$0.70\pm0.02$}\\
&Recall&{${1.00\pm0.00}$}&\multicolumn{1}{c}{$1.00\pm0.00$}&{$1.00\pm0.00$}&{$1.00\pm0.00$}&{$1.00\pm0.00$}&{$0.95\pm0.05$}\\
&F1&{${0.96\pm0.06}$}&\multicolumn{1}{c}{$0.88\pm0.05$}&{${0.88\pm0.04}$}&{$0.87\pm0.03$}&{$0.85\pm0.02$}&{$0.83\pm0.05$}\\
&SHD&{${0.40\pm0.55}$}&\multicolumn{1}{c}{$2.50\pm1.20$}&{${4.96\pm2.06}$}&{$8.80\pm2.34$}&{$17.40\pm2.97$}&{$32.70\pm4.37$}\\

\cline{1-8}

\end{tabular}}
\end{center}

    \caption{Evaluation of Func-LiNGAM with various number $p$ of functions. The causal graph is as $f_1\rightarrow f_2\rightarrow \dots \rightarrow f_p$ (50 trials).}
    \label{tab:my_label}
\end{table}


\section{Actual Data}\label{s6}

This section demonstrates the application of the proposed approach to analyzing brain connectomes for functional magnetic resonance imaging (fMRI) data. The fMRI data \citep{OpenNeuro} is preprocessed by downsampling it to a resolution of 4mm, with a repetition time (TR) of 2 seconds. This data consists of 155 subjects $(n=155)$, 168 time points {$(W=168)$}, and 17 parcels $(p=17)$.  During the study, 155 participants took part in the fMRI scans. Among them, 122 participants were children, 33 were adults. The participants were instructed to watch a short animated movie that aimed to evoke various mental states and physical sensations about the characters depicted in the movie. Our objective is to investigate the causal relationships between various brain regions when individuals watch the short film, regardless of age. To check the Gaussianity of the observed functions, we performed the Shapiro–Wilk normality test \citep{S-W} on $p = 17$ parcels at each $W = 168$ time point. The null hypothesis (i.e., the observations are marginally Gaussian) was rejected
for many combinations of scalp position and time point, and therefore, the non-Gaussianity
of the proposed model is deemed appropriate.
Next, we estimate the adjacency matrix between the parcels with the number of principal components $M=5$. The adjacency matrix reveals the presence of connections between specific parcel pairs. 
To visualize the brain connectivity and causal relationships, we present a 2D graph using the \verb|Nilearn| Python package and a 3D graph using the BrainNet Viewer \citep{Brain} {(Fig.~\ref{fig:6})}.
\begin{figure}[!ht]
	\centering

\includegraphics[width=0.6\textwidth,height=0.25\textwidth]{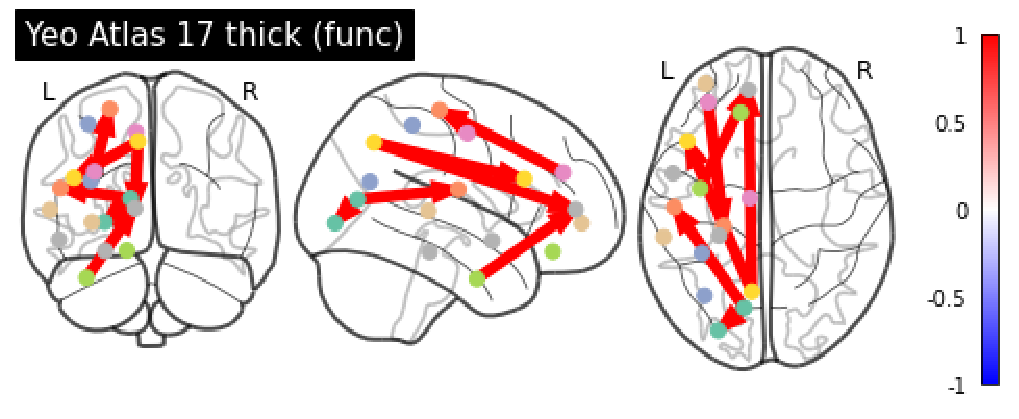}\hfill
\includegraphics[width=0.38\textwidth,height=0.25\textwidth]{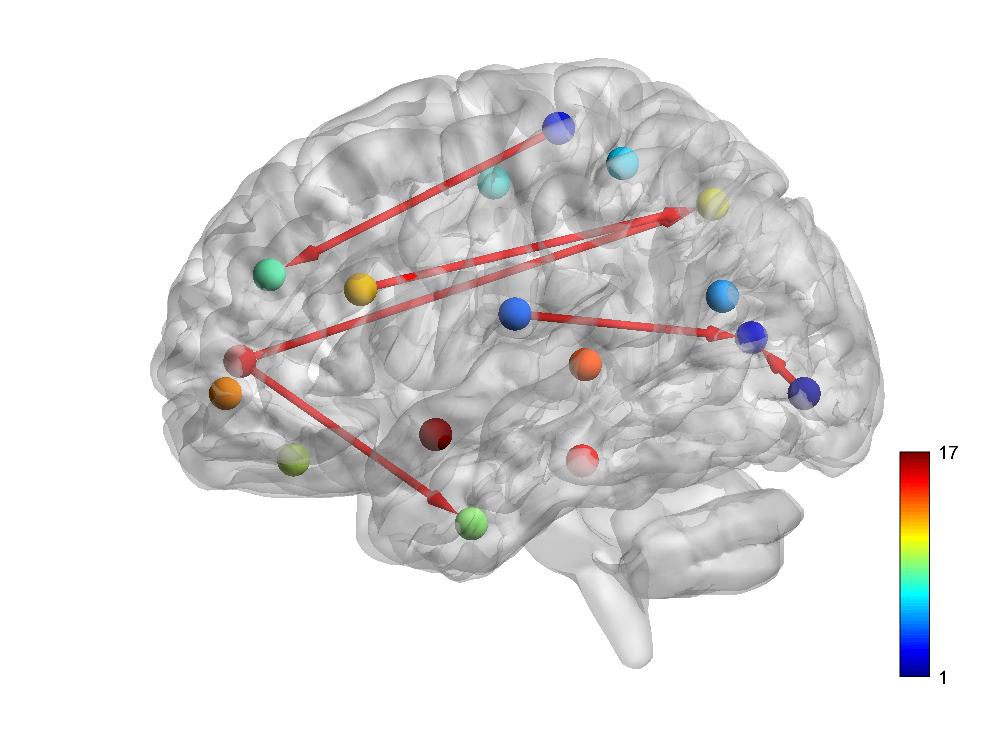}\hfill
		\caption{Brain Connectivity Graphs (Left: 2D , Right: 3D).}
  \label{fig:6}
\end{figure}

\section{Conclusion}\label{s7}
We have introduced a novel framework called Func-LiNGAM, which aims to identify causal relationships among random functions. For the theoretical foundation of Func-LiNGAM, we have proven the identifiability of both non-Gaussian random vectors (Theorem \ref{t1}) and non-Gaussian processes (Theorem \ref{t6}). Additionally, we have proposed a method to approximate random functions using random vectors based on Functional Principal Component Analysis (FPCA).
Empirically, we demonstrate that the proposed procedure of Func-LiNGAM achieves accurate and efficient identification of causal orders among non-Gaussian random functions. Furthermore, we have preliminarily applied Func-LiNGAM to analyze brain connectivity using fMRI data. Our framework combines theoretical advancements with practical applications, showcasing its effectiveness in identifying causal relationships among random functions and its potential for various domains, such as brain connectivity.

\bibliography{references}   

\begin{thebibliography}{29}
\providecommand{\natexlab}[1]{#1}
\providecommand{\url}[1]{\texttt{#1}}
\expandafter\ifx\csname urlstyle\endcsname\relax
  \providecommand{\doi}[1]{doi: #1}\else
  \providecommand{\doi}{doi: \begingroup \urlstyle{rm}\Url}\fi

\bibitem[Darmois(1953)]{D}
G.~Darmois.
\newblock Analyse generale des liaisons stochastiques.
\newblock \emph{Rev. Inst. Intern. Stat}, 21:\penalty0 2--8, 1953.

\bibitem[Ebert-Uphoff and Deng(2012)]{climate}
I.~Ebert-Uphoff and Y.~Deng.
\newblock {Causal Discovery for Climate Research Using Graphical Models}.
\newblock \emph{Journal of Climate}, 25\penalty0 (17):\penalty0 5648–5665, 2012.

\bibitem[Friston(2001)]{23}
K.~J. Friston.
\newblock Functional and effective connectivity: A review.
\newblock \emph{Brain Connectivity}, 1\penalty0 (1):\penalty0 13--36, 2001.

\bibitem[Ghurye and Olkin(1962)]{MDS}
S.~Ghurye and I.~Olkin.
\newblock A characterization of the multivariate normal distribution.
\newblock \emph{Ann. Math. Statist.}, 33\penalty0 (2):\penalty0 533--541, June 1962.

\bibitem[Gong et~al.(2023)Gong, Yao, Zhang, Li, Bi, Du, and Wang]{survey}
C.~Gong, D.~Yao, C.~Zhang, W.~Li, J.~Bi, L.~Du, and J.~Wang.
\newblock Causal discovery from temporal data.
\newblock In \emph{Proceedings of the 29th ACM SIGKDD Conference on Knowledge Discovery and Data Mining}, KDD '23, page 5803–5804, New York, NY, USA, 2023. Association for Computing Machinery.
\newblock ISBN 9798400701030.
\newblock \doi{10.1145/3580305.3599552}.
\newblock URL \url{https://doi.org/10.1145/3580305.3599552}.

\bibitem[HSING and EUBANK(2015)]{fdabook}
T.~HSING and R.~EUBANK.
\newblock \emph{Theoretical Foundations of Functional Data Analysis, with an Introduction to Linear Operators.}
\newblock Wiley, Chichester, 2015.

\bibitem[Lee and Li(2022)]{24}
K.~Y. Lee and L.~Li.
\newblock Functional structural equation model.
\newblock \emph{Journal of the Royal Statistical Society: Series B}, 84\penalty0 (2), 2022.

\bibitem[Li(2018)]{Libing}
B.~Li.
\newblock Linear operator-based statistical analysis: A useful paradigm for big data.
\newblock \emph{Canadian Journal of Statistics}, 46\penalty0 (1):\penalty0 79--103, 2018.
\newblock \doi{https://doi.org/10.1002/cjs.11329}.
\newblock URL \url{https://onlinelibrary.wiley.com/doi/abs/10.1002/cjs.11329}.

\bibitem[Lucas et~al.(2004)Lucas, Gaag, and Abu-Hanna]{health}
P.~J. Lucas, L.~C. Gaag, and A.~Abu-Hanna.
\newblock Bayesian networks in biomedicine and health-care.
\newblock \emph{Artificial Intelligence in Medicine}, 30:\penalty0 201--214, 04 2004.
\newblock \doi{10.1016/j.artmed.2003.11.001}.

\bibitem[Luo et~al.(2019)Luo, Song, Styner, Gilmore, and Zhu]{22}
S.~Luo, R.~Song, M.~Styner, J.~Gilmore, and H.~Zhu.
\newblock {FSEM: Functional structural equation models for twin functional data}.
\newblock \emph{Journal of the American Statistical Association}, 114\penalty0 (525):\penalty0 344--357, 2019.

\bibitem[Myronyuk(2008)]{DSB}
M.~V. Myronyuk.
\newblock {On the Skitovich-Darmois theorem and Heyde theorem in a Banach space}.
\newblock \emph{Ukr Math J}, 60:\penalty0 1437--1447, 2008.

\bibitem[Pearl(2000)]{Pearl}
J.~Pearl.
\newblock \emph{{Causality: Models, Reasoning, and Inference}}.
\newblock Cambridge, U.K.: Cambridge University Press, 2000.

\bibitem[Qiao et~al.(2019)Qiao, Guo, and James]{FGM}
X.~Qiao, S.~Guo, and G.~M. James.
\newblock {Functional Graphical Models}.
\newblock \emph{J. Amer. Statist. Assoc.}, 114\penalty0 (525):\penalty0 211--222, 2019.

\bibitem[Ramsay and Silverman(2005)]{FDA2005}
J.~O. Ramsay and B.~W. Silverman.
\newblock \emph{{Functional Data Analysis}}.
\newblock Springer, 2005.
\newblock ISBN 9780387400808.
\newblock URL \url{http://www.worldcat.org/isbn/9780387400808}.

\bibitem[Richardson et~al.(2018)Richardson, Lisandrelli, Riobueno-Naylor, and Saxe]{OpenNeuro}
H.~Richardson, G.~Lisandrelli, A.~Riobueno-Naylor, and R.~Saxe.
\newblock Development of the social brain from age three to twelve years.
\newblock \emph{Nature Communications}, 9\penalty0 (1):\penalty0 1027, 2018.

\bibitem[Roy et~al.(2023)Roy, Wong, and Ni]{roy2023directed}
S.~Roy, R.~K.~W. Wong, and Y.~Ni.
\newblock Directed cyclic graph for causal discovery from multivariate functional data, 2023.

\bibitem[Sachs et~al.(2005)Sachs, Perez, Pe'er, Lauffenburger, and Nolan]{biology}
K.~Sachs, O.~Perez, D.~Pe'er, D.~A. Lauffenburger, and G.~P. Nolan.
\newblock Causal protein-signaling networks derived from multiparameter single-cell data.
\newblock \emph{Science}, 308\penalty0 (5721):\penalty0 523--529, 2005.
\newblock \doi{10.1126/science.1105809}.
\newblock URL \url{https://www.science.org/doi/abs/10.1126/science.1105809}.

\bibitem[Shapiro and Wilk(1965)]{S-W}
S.~S. Shapiro and M.~B. Wilk.
\newblock An analysis of variance test for normality (complete samples).
\newblock \emph{Biometrika}, 52\penalty0 (3/4):\penalty0 591--611, 1965.
\newblock ISSN 00063444.
\newblock URL \url{http://www.jstor.org/stable/2333709}.

\bibitem[Shimizu et~al.(2006)Shimizu, Hoyer, Hyv{\"a}rinen, and Kerminen]{Shimizu}
S.~Shimizu, P.~O. Hoyer, A.~Hyv{\"a}rinen, and A.~Kerminen.
\newblock {A Linear Non-Gaussian Acyclic Model for Causal Discovery}.
\newblock \emph{Journal of Machine Learning Research}, 7\penalty0 (72):\penalty0 2003--2030, 2006.

\bibitem[Shimizu et~al.(2011)Shimizu, Inazumi, Sogawa, Hyv{\"a}rinen, Kawahara, Washio, Hoyer, and Bollen]{Direct}
S.~Shimizu, T.~Inazumi, Y.~Sogawa, A.~Hyv{\"a}rinen, Y.~Kawahara, T.~Washio, P.~O. Hoyer, and K.~Bollen.
\newblock {DirectLiNGAM: A Direct Method for Learning a Linear Non-Gaussian Structural Equation Model}.
\newblock \emph{Journal of Machine Learning Research}, 12\penalty0 (33):\penalty0 1225--1248, 2011.

\bibitem[Skitivic(1953)]{S}
V.~P. Skitivic.
\newblock On a property of the normal distribution.
\newblock \emph{Dokl. Akad. Nauk SSSR (N.S.)}, (89):\penalty0 217--219, 1953.

\bibitem[Spirtes et~al.(2000)Spirtes, Glymour, and Scheines]{Spirtes}
P.~Spirtes, C.~Glymour, and R.~Scheines.
\newblock \emph{Causation, Prediction, and Search}.
\newblock The MIT Press, 2000.

\bibitem[Tsamardinos et~al.(2006)Tsamardinos, Brown, and Aliferis]{SHD}
I.~Tsamardinos, L.~Brown, and C.~Aliferis.
\newblock The max-min hill-climbing bayesian network structure learning algorithm.
\newblock \emph{Mach Learn}, 65:\penalty0 31--78, 2006.

\bibitem[Tsay and Pourahmadi(2017)]{21}
R.~Tsay and M.~Pourahmadi.
\newblock Modelling structured correlation matrices.
\newblock \emph{Biometrika}, 104\penalty0 (1):\penalty0 237--242, 2017.

\bibitem[van Neerven(2020)]{IC}
J.~van Neerven.
\newblock {Stochastic Evolution Equations}, 3 2020.
\newblock URL \url{https://ocw.tudelft.nl/courses/stochastic-evolution-equations/subjects/lecture-4-gaussian-random-variables/}.

\bibitem[Wei and Li(2008)]{20}
Z.~Wei and H.~Li.
\newblock A hidden spatial-temporal markov random field model for network-based analysis of time course gene expression data.
\newblock \emph{The Annals of Applied Statistics}, 2\penalty0 (1):\penalty0 408--429, 2008.

\bibitem[Xia et~al.(2013)Xia, Wang, and He]{Brain}
M.~Xia, J.~Wang, and Y.~He.
\newblock {BrainNet Viewer: A network visualization tool for human brain connectomics}.
\newblock \emph{PLoS ONE}, 8\penalty0 (7):\penalty0 e68910, 2013.

\bibitem[Yang and Suzuki(2022)]{TJ2022}
T.~Yang and J.~Suzuki.
\newblock The functional lingam.
\newblock In A.~Salmer{\"o}n and R.~Rum{\"i}, editors, \emph{The 11th International Conference on Probabilistic Graphical Models}, volume 186, pages 25--36. PMLR, 05--07 Oct 2022.

\bibitem[Zhou et~al.(2022)Zhou, He, Wang, Xu, and Ni]{ZY2022}
F.~Zhou, K.~He, K.~Wang, Y.~Xu, and Y.~Ni.
\newblock Functional bayesian networks for discovering causality from multivariate functional data, 2022.

\end{thebibliography}


\end{document}